\newtheorem{mydef}{Definition}
\newtheorem{thm}{Theorem}
\newcounter{example}[section]
\newcommand{\vect}[1]{\mathbf{#1}} 
\newcommand{\ve}[1]{\mathbf{#1}} 
\newcommand{\ssl}[1]{\mathcal{#1}} 
\newcommand{\x}[1]{\mathbf{x}_{#1}}
\begin{document}

\title{Simple Kinesthetic Haptics for Object Recognition}

\author{Avishai~Sintov and Inbar Ben-David\footnote{School of Mechanical Engineering, Tel-Aviv University, Israel.}}

\maketitle

\begin{abstract}
Object recognition is an essential capability when performing various tasks. Humans naturally use either or both visual and tactile perception to extract object class and properties. Typical approaches for robots, however, require complex visual systems or multiple high-density tactile sensors which can be highly expensive. In addition, they usually require actual collection of a large dataset from real objects through direct interaction. In this paper, we propose a kinesthetic-based object recognition method that can be performed with any multi-fingered robotic hand in which the kinematics is known. The method does not require tactile sensors and is based on observing grasps of the objects. We utilize a unique and frame invariant parameterization of grasps to learn instances of object shapes. To train a classifier, training data is generated rapidly and solely in a computational process without interaction with real objects. We then propose and compare between two iterative algorithms that can integrate any trained classifier. The classifiers and algorithms are independent of any particular robot hand and, therefore, can be exerted on various ones. We show in experiments, that with few grasps, the algorithms acquire accurate classification. Furthermore, we show that the object recognition approach is scalable to objects of various sizes. Similarly, a global classifier is trained to identify general geometries (e.g., an ellipsoid or a box) rather than particular ones and demonstrated on a large set of objects. Full scale experiments and analysis are provided to show the performance of the method.
\end{abstract}



\section{Introduction}
\label{sec:introduction}

When humans reach for an object, such as a mobile phone or the computer mouse, they can identify the object even while their gaze is directed elsewhere \citep{Lederman1993}. Some finger contact with the object is usually sufficient to acquire essential information to instantly identify the object \citep{Klatzky1985, Klatzky1995}. The notion of \textit{Exploratory procedures} (EP), defined by \cite{Lederman1987}, hypothesized that haptic exploration typically involves a stereotypical set of hand gestures, each aimed to identify a specific property of the object. These include lateral motion, pressure and some arbitrary contact for texture, stiffness and temperature sensing. More relevant topics are finger enclosure and contour following for shape or volume recognition \citep{Driess2017}.

\begin{figure}
\centering
\includegraphics[width=\linewidth]{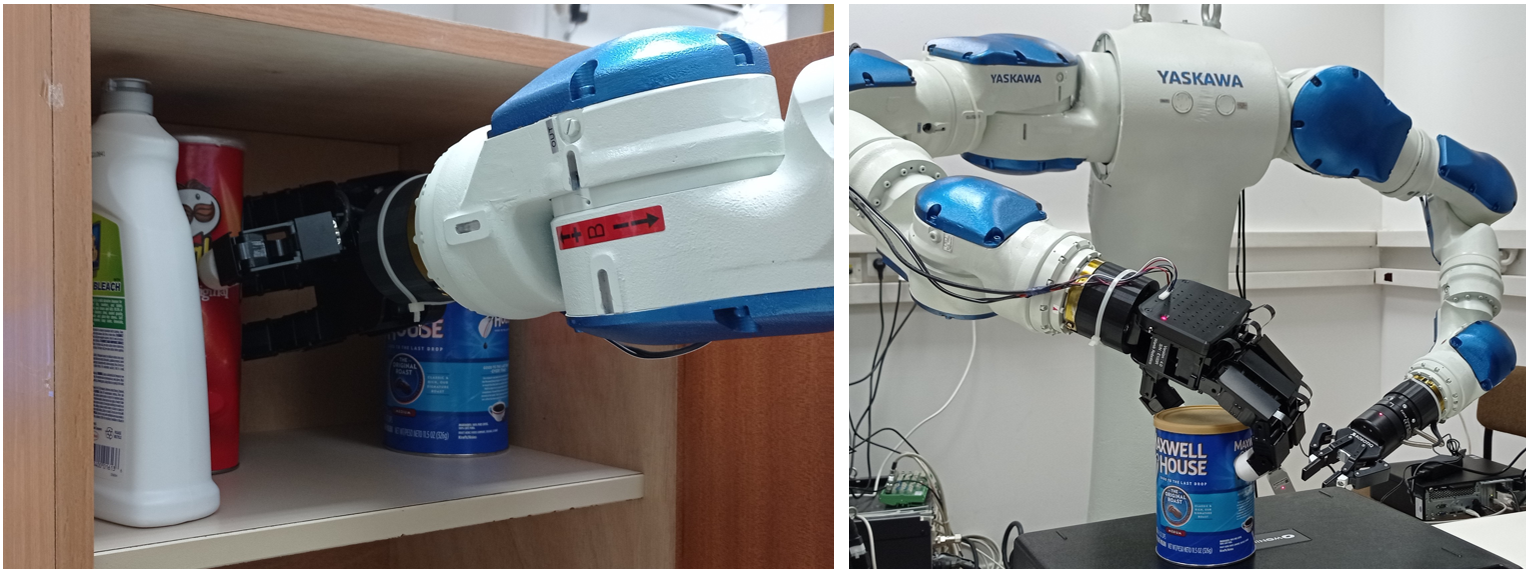}
\caption{Robotic haptic glances to identify (left) an occluded object in a cabinet and (right) on a table.}
\label{fig:allegro_grasp}
\end{figure}

Inspired by these studies, robotic hands have been developed to include haptic capabilities, enabling the identification of objects and their properties \citep{Dario1992}. Contrary to visual sensing, haptic sensors are capable of directly perceiving physical properties of an object such as shape,  compliance, texture and temperature. With haptic sensing technology, robots can extract information of an object even when the object is fully or partially occluded to a visual system. This can happen when, for instance, searching for an object at the back of the closet or in a box full of items.

Information from haptic sensors is acquired through direct interaction with objects usually involving both tactile sensing and internal sensing of joint actuators known as Kinesthetic haptics \citep{Carter2005}. Traditionally, tactile refers to information received from touch and contact sensing, while kinesthetic refers to information sensed through movement, force or position of joints and actuators. Tactile sensing with or without visual perception is the leading method for haptic-based object recognition \citep{Rouhafzay2020}. State of the art in tactile sensing for identifying and classifying objects usually involves several high-density tactile sensors on the robotic hand which can be highly expensive and may complicate the hardware \citep{Drimus2011}. Furthermore, training classifiers to process tactile information requires actual data collection from a set of objects \citep{Spiers2016,Lin2019,Jasper2020}. Hence, even if tactile sensors exist, actual collection is required and may not be transferable to another hand. Simulating viable tactile signals, on the other hand, is a challenging task and cannot be used straight-forward to train a classifier \citep{Narang2020}. In addition, most of the existing work can only deal with simple shaped objects \citep{Liu2017}. Previous work on object recognition without tactile sensors suggested to map the surface of the object by collecting a point cloud of contact points using a robotic hand \citep{Allen1989,Jin2013}. The data is then used to fit a parametric surface model. The method, however, requires a large dataset of actual measurements and simplifies the model to some quadratic form. 

In our work, we propose a kinesthetic method to classify objects with neither tactile sensors nor visual perception, and without data collection on actual objects. A grasp embeds an instance of the surface of held objects. Different objects have different patterns of grasps based on their shapes. Such grasp, however, is usually described by the location of the contact points and depends on the used reference frame. Hence, a grasp described by the contact points is not a unique representation and may be challenging when used to train a classifier. On the other hand, a grasp parameterization technique proposed by \cite{Sintov2014124,Sintov3D2016} is unique and independent of any coordinate frame. The proposed representation treats a spatial grasp as a polyhedron and provides an injective parameterization. One may add more descriptive information of the object by also including the normals at the contact points relative to the polyhedron. 

\begin{figure*}[!ht]
\centering
\includegraphics[width=\linewidth]{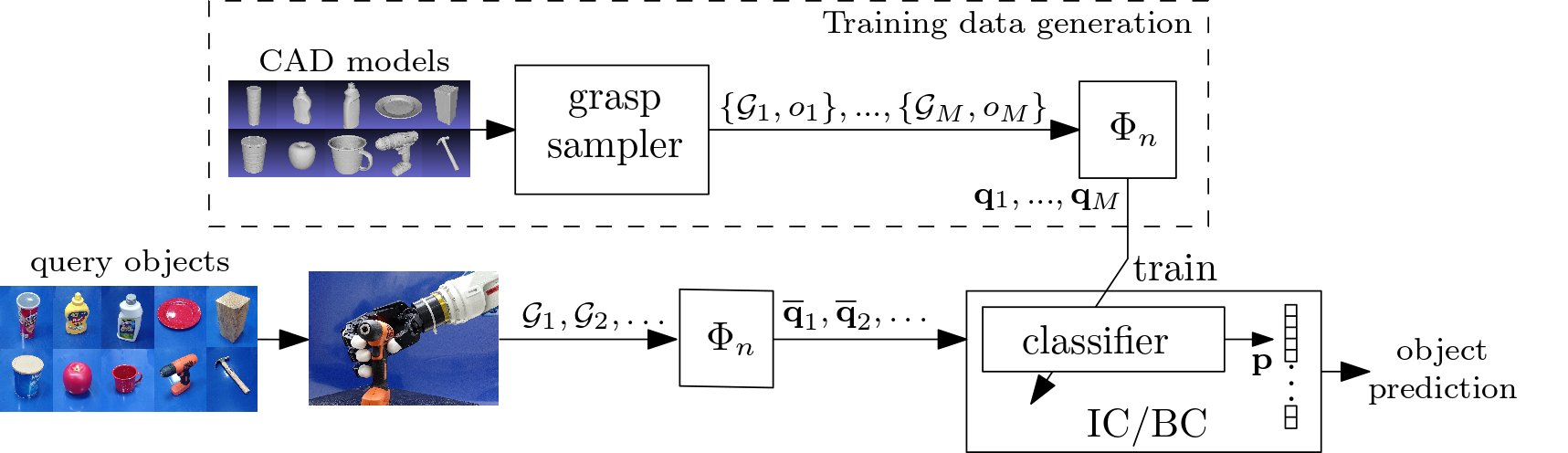}
\caption{Illustration of the (top) training and (bottom) evaluation processes. CAD models of the objects are used to generate labeled data for training a classifier. The classifier, trained solely over CAD models and independent of any particular robot hand, can now be used to classify real object. Recognition of real objects is done by random sampling grasps using a robotic hand in an iterative classification process - Iterative Classification (IC) or Bayesian Classification (BC).}
\label{fig:scheme}
\end{figure*}

In this paper, we show that with the proposed grasp representation
, a classifier can be trained to identify complex objects within several grasps. The classifier is trained solely on object information based on its geometry without the need of a physics-engine simulator. The classifier is independent of any particular robot hand and, therefore, can be applied to various ones. A grasp, in this context, is an \textit{haptic glance} \citep{Dragiev2013} aimed to gain more information and improve the classes probability distribution. Such grasp is demonstrated by the Allegro hand in Figure \ref{fig:allegro_grasp}. 
Generating training data is solely done computationally by sampling contact points from the surface of Computer-Aided Design (CAD) models of the objects independent of the type of the used hand. Hence, adding an object is fast (up to several minutes). If a CAD model is not available, the object can be scanned or actual data collection with a robotic hand is possible. Actual grasp information for object recognition queries is acquired through kinesthetic sensing, i.e., measuring joint angles and torques of the fingers while grasping the query object. 

We assume that the general location of the object is a priori known. The location of the object can be acquired through low-level object localization without complex Convolutional Neural Network classification \citep{Du2020}. For instance, 2D salient object detection can be applied through visual perception with a simple camera either on the hand or around the robot. We leave the object localization problem out of the scope of this paper.

When classifying an object, one grasp generally does not include enough surface information. Therefore, we propose and investigate the use of iterative methods to improve predictions, and provide a comparative analysis between methods. An illustration of the process is shown in Figure \ref{fig:scheme}. We propose to use either a scoring-based iterative method or Bayesian update to approximate the certainty about an object based on a few grasp samples. We also investigate the ability to classify an object with partial grasp information solely with finger contact location while not including the normals at the contact points. Furthermore, we explore the ability of the technique to generalize to families of objects (e.g., ellipsoids, cylinders) and not only specific objects with available CAD models. Hence, we provide a global classifier for any multi-finger robotic hand to recognize general types of objects. We also exhibit a scale invariant property of the method when simply normalizing the grasp polyhedron.


\section{Related work}
\label{sec:related_work}

Humans and animals use haptic sensory receptors and EP to perceive and interact with the environment \citep{Nelinger2016}. A study has identified neurons in the post-central gyrus of monkeys that are activated selectively to grasped objects while not activated when touching their own body \citep{Iwamura1995}. These neurons are tuned to specific features that discriminate common and familiar objects from the animal's own body. 
Inspired by these studies, robotic systems have been developed to
grant a robotic hand with such haptic capabilities and to identify objects. These efforts will now be discussed.


\subsection{Object Learning through tactile perception}

Tactile perception in robotics is commonly used to learn a priori unknown object in uncertain environments to grasp and manipulate it. The learning is generally used to either learn features of an object or for object recognition. The latter is discussed in the next section. Feature learning may include compliance \citep{Su2012}, texture \citep{Fishel2012, Wenzhen2017}, temperature variations \citep{Dario1992} and surface modeling. A common practice to model the surface of an unknown object is by iteratively selecting discrete locations, known as \textit{haptic glances}, where a robot touches \citep{Dragiev2013,Yi2016}. Tactile glances are used to improve an underlying surface model of the object using Gaussian process regression while incorporating model uncertainty. Similarly, tactile glances were used to augment noisy visual perception of an object \citep{Bjorkman2013}. In the tactile glances process, \cite{Jamali2016} proposed to incorporate a Gaussian process classifier to identify the boundary of the object and to filter-out glances that do not belong to it. More recently active exploration was proposed by \cite{Driess2017} where a robotic arm slides a probe over the surface of the object to acquire surface information. By exploring paths over the object, a continuous approximation of its surface is acquired. 

\subsection{Object Recognition}

Object recognition has been widely researched in the past few decades and can be divided into methods of visual \citep{Lowe1999} and tactile \citep{Okamura1997} perceptions. Both aim to identify an object in the vicinity of a robot. For visual perception, deep neural networks, or convolutional neural networks (CNN) in particular, have been making significant achievements in detection and classification of objects seen in 2D images \citep{Deng2009, Lin2014}. In contrary, classification based on 3D (depth) images is usually much more complicated due to higher dimensional representation of the data and the required computational effort \citep{Griffiths2019}. PointNet is a recent advancement where a unified neural network directly takes raw 3D point clouds for object classification and segmentation \citep{Charles2017}. In another work, a set of 2D images taken from a 3D CAD model was rendered to generate a multi-view CNN that classifies objects from different viewing angles \citep{Su2015}. While these methods achieve high success rates, they require a substantial computational power and a direct line of sight with an object limited to a viewing area that obscures its back-facing region. The object can also be fully or partly occluded. 

Contrary to vision, tactile-based recognition requires direct contact with an object. Supervised learning techniques are the preferred approach to map tactile feature data from an object to its class. Hence, the common research question lies in the type of data to measure. \cite{Drimus2014} developed flexible tactile sensors that provide pressure images during object squeezing. Tactile information corresponding to the pressure distribution during the squeezing was used with a nearest neighbor classifier to classify the object in hand. \cite{Luo2018} used patterns of tactile images along with kinesthetic information while utilizing $k$-means clustering to increase classification success rate. Similar tactile technology described the local shape of objects by the covariance matrix of the pressure image and used in a Naive Bayes classifier \citep{Liu2012}. Another work by \cite{Rouhafzay2020} combined visual perception to identify salient points over the surface of the objects in order to improve object recognition from a haptic image glance. 

\cite{Spiers2016} proposed a method to perform a single grasp of an object using an underactuated hand with embedded pressure sensors on each finger. A hybrid approach uses measured pressure data and a hand kinematic estimator to train a random forest classifier and a parametric method. The approach enables to determine the class of an object and its physical properties including stiffness and size. All of the above tactile methods require somewhat tedious actual collection of data for all objects prior to training a classifier. Furthermore, these methods make use of complex, high-density sensor systems on the robotic hand which can be highly expensive and complicate the hardware. Our proposed approach, however, is distinguished from previous work by not requiring any tactile sensors and by the ability to train a classifier solely with data generated computationally on CAD models of the objects.

The work by \cite{Allen1989} has some similarities to ours in which a large dataset of contact points is used to fit a quadratic surface model and further classify the shape. Further work included also normal information in the point cloud of contacts \citep{Jin2013}. Both methods simplify the query objects to a set of shape primitives such as spheres and cylinders. We, however, use the full representation of an object, do not reconstruct its surface and use a significantly lower amount of data to identify it.


\subsection{Grasp \& Object Parameterization}

Object shape parameterization has been widely researched for various applications \citep{Sarkar2017}. Much work has been done in the area of 3D shape similarity comparison. The work described by \cite{Muser2012}, \cite{Novotni2001} and \cite{Osada2002} applied algorithms that were, to that point, used for Internet and local database search, face recognition, image processing, or parts identification. The work of \cite{Ohbuchi2002} on shape similarity search uses a generalized feature vector of a 3D polygonal mesh constructed from the moment of inertia, average distance of the surface from the model's axis, and its variance. However, such methods deal with mean parameterization of the geometry (such as volume, shape distribution, moment of inertia) of the objects and cannot be applied for grasping. 

Inspired by the above, \cite{Li2005} combined the notion of object parameterization with grasping. The work is based on shape matching for finding the best grasp for a set of objects. The best grasp is found by matching hand poses from a database of objects and human grasp postures. This is done by using a predefined parameterization of the object surface and the hand poses. An entire representation of any $n$-finger grasps was later presented that uniquely parameterized polygons formed by planar grasps   \citep{SintovCASE2012,Sintov2014124} and polyhedrons by spatial ones \citep{Sintov3D2016}. Such parameterization was used to compare grasps of different objects to find a common grasp and the matching design of a simple non-dexterous gripper to grasp them all. The method was also applied for grasping sheet-metal parts \citep{RAM2015}. This parameterization is used in this work and presented next.


\section{Grasp representation}
\label{sec:grasp_rep}

In this section, we discuss a grasp representation based on previous work proposed by \cite{Sintov2014124,Sintov3D2016}. An $n$-finger grasp of object $\ssl{O}$ can be defined by a set of $n$ contact points, 
\begin{equation}
\ssl{P}=\left\{\x{i}:\x{i}\in\mathbb{R}^D~\text{for}~i=1,...,n\right\}
\label{eq:sslP}
\end{equation}
on the object's surface, and the normal to the object's surface at each point
\begin{equation}
\ssl{N}=\left\{\ve{n}_i:\ve{n}_i\in\mathbb{R}^D~\text{for}~i=1,...,n\right\},
\label{eq:sslN}
\end{equation}
where $\|\ve{n}_i\|=1$ and, $D=2$ or $D=3$ for planar and spatial objects, respectively. We require a map $\Phi_n$ that takes any grasp $\ssl{G}=\{\ssl{P},\ssl{N}\}$ defined in some reference frame and maps it into a $w$-dimensional injective vector $\vect{q}\in\ssl{Q}$ representing the grasp, where $\ssl{Q}\subset\mathbb{R}^w$. That is, we require a map
\begin{equation}
\label{eq:mapping}
\Phi_n:\ssl{G}\to \ssl{Q},
\end{equation}
such that $\Phi_n^{-1}(\ve{q}_j)\neq \Phi_n^{-1}(\ve{q}_k)$ for any $j\neq k$. We note that the grasp representation to be described is valid for $n\geq3$ while we will further exclude an $n=2$ finger grasp case.

While we do not present results for the planar case, for intuition and generality, we first describe a planar representation where $D=2$. An $n$-finger planar grasp can be presented by an $n$-vertex polygon, where each vertex is positioned at a contact point as seen in Figure \ref{fig:planar_grasp}. Such polygon can be generated by constructing concave or convex polygons from a set of points in a plane according to polygonization algorithms \citep{Deneen1988}. As shown by \cite{Sintov2014124}, an $n$-vertex polygon requires $2n-3$ constraints to define its shape and size. Explicitly, $n-1$ internal angles of the polygon $\{\gamma_1,...,\gamma_{n-1}\}$ along with $n-2$ edge lengths $\{d_1,\ldots,d_{n-2}\}$ sufficiently define its shape and size summing up to $2n-3$ parameters. Further, to describe each contact, we define the normal at the contact relative to the grasp polygon, i.e., the normal direction can be defined by its angle relative to an adjacent edge of the polygon yielding $n$ additional parameters $\{\theta_1,\ldots,\theta_n\}$. The resulting $w$-dimensional grasp vector is 
\begin{equation}
    \ve{q}=(\gamma_1,...,\gamma_{n-1},d_1,\ldots,d_{n-2},\theta_1,\ldots,\theta_n)
\end{equation}
where $w=3n-3$. For example, a 3-finger grasp will create a triangle parameterized with two angles and an edge length. Defining the normals will add three more angles yielding a 6-dimensional parameterization vector given by
\begin{equation}
    \ve{q}=(\gamma_1,\gamma_{2},d_1,\theta_1,\theta_2,\theta_3).
\end{equation}
Similarly, the 4-finger grasp seen in Figure \ref{fig:planar_grasp} forms a 9-dimensional vector.
Such parameterization is not yet injective since several parameter choices can be made for the same polygon. Therefore and as a convention, parameter $d_1$ is always taken to be the longest edge while $d_2,...,d_{n-2}$ are taken sequentially in the counter-clockwise direction.
\begin{figure}
\centering
\includegraphics[width=0.45\textwidth]{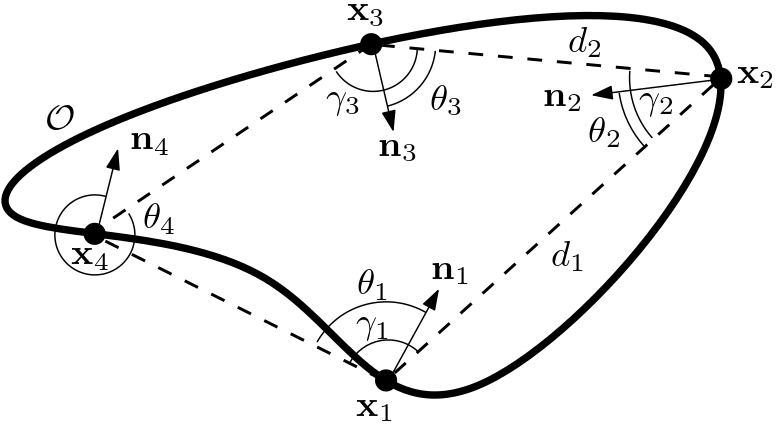}
\caption{Example of a 4-vertex polygon describing a 4-finger ($n=4$) grasp of object $\ssl{O}$ with a vector of nine parameters.}
\label{fig:planar_grasp}
\end{figure}

In the spatial case where $D=3$, the $n$ contact points $\ssl{P}$ form a polyhedron $\Theta(\ssl{P})$ with $n$ vertices as seen in Figure \ref{fig:spatial_grasp}. A triangulation of a set $\ssl{P}$ is the creation of simplices, where their vertices are points of $\ssl{P}$ such that the union of them equals $\Theta(\ssl{P})$ \citep{Avis1987}. Triangulation of $\ssl{P}$ can be done with algorithms such as the Quickhull \citep{Bradford-Barber1996} or the space sweep technique \citep{Preparata1985}. The algorithm will output a set of triangles forming $\Theta(\ssl{P})$. Similar to the planar case, we divide the parameterization vector into constraints for the shape and size, and the ones that define the contact normals. An $n$-vertex polyhedron requires $3n-6$ constraints to define its shape and size \citep{Sintov3D2016}. Here also, we may paramterize some of the triangles forming the polyhedron and their respected posture. However, there are many possible representations for the polyhedron depending on the order in which we parameterize its triangles. Hence, we parameterize the polyhedron in a specific order, ensuring an injective grasp representation. For brevity, we solely provide a brief overview of the algorithm while a full description is presented in previous work \citep{Sintov3D2016}. The algorithm first parameterizes the triangle (similar to the planar case - two angles $\gamma_1,\gamma_2$ and the longest edge length $d$) with the largest area, denoted as $t_1$, followed by the adjacent one $t_2$ through the longest edge. The parameterization moves on through a chain of $n-2$ adjacent triangles $t_j$, each with 3 parameters: two angles $\gamma_1^j,\gamma_2^j$ and the angle $\vartheta_j$ between triangle facets $t_j$ and $t_{j-1}$. Moreover, the algorithm describes the normals at the contact points relative to the polyhedron. Normal $\ve{n}_k$ needs two constraints in order to be defined and are given by two angles $\phi_1^k,\phi_2^k$ between the normal and triangles $t_1$ and $t_2$. For the 3-finger case, the two angles are defined relative to $t_1$ and the adjacent edge. Therefore, the resulting vector is of dimension $w=5n-6$ and is given by
\begin{multline}
    \ve{q}=(\gamma_1^1,\gamma_2^1,d,\gamma_1^2,\gamma_2^2,\vartheta_2,\ldots\\\gamma^{n-2}_1,\gamma^{n-2}_2,\vartheta_{n-2},\phi^1_1,\phi^1_2,\ldots\phi^n_1,\phi^n_2).
\end{multline}
For example, the 4-finger grasp in Figure \ref{fig:spatial_grasp} requires a 14-dimension vector: six for the geometry and eight for the normals:
\begin{equation}
    \ve{q}=(\gamma_1^1,\gamma_2^1,d,\gamma_1^2,\gamma_2^2,\vartheta_2,\phi^1_1,\phi^1_2,\ldots\phi^4_1,\phi^4_2).
\end{equation}
\begin{figure}
\centering
\includegraphics[width=0.35\textwidth]{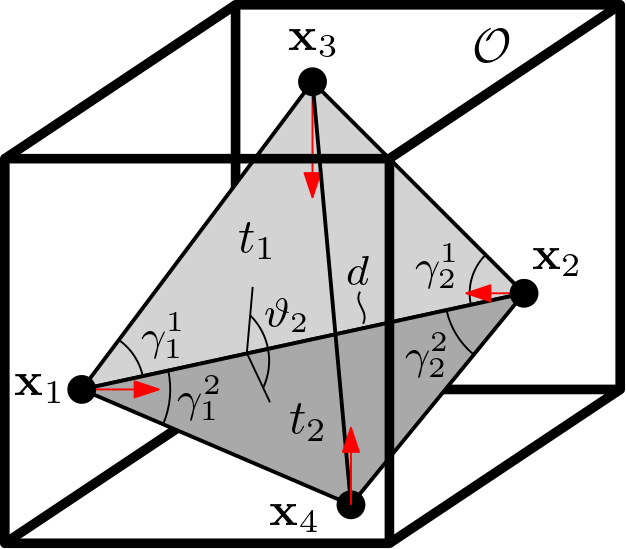}
\caption{Example of a 4-finger grasp of a box. A 4-vertex polyhedron formed by four contact points is defined by six parameters. The contact normals are marked in red while their eight parameterization angles are omitted for simplicity. }
\label{fig:spatial_grasp}
\end{figure}

Map $\Phi_n$ presents a parameterization of a grasp described by contact points on the object and the normals at the contacts. In both planar and spatial cases, the map provides a unique vector form for a grasp. It should be noted that this grasp representation is unique only when normals are included. We now argue that such mapping is also invariant to any coordinate frame.
\begin{thm}
\label{thm:frame_invariance}
Let object $\ssl{O}$ be described relative to coordinate frame $\ssl{C}_i$, and let coordinate frame $\ssl{C}_j$ be defined such that $\ve{c}_i=R\ve{c}_j+\ve{d}$ for any $\ve{c}_i\in\ssl{C}_i$ and $\ve{c}_j\in\ssl{C}_j$, and for some $R\in SO(3)$ and $\ve{d}\in\mathbb{R}^3$. Given a grasp $\ssl{G}_k=\{\ssl{P}_k,\ssl{N}_k\}$ of the object described in $\ssl{C}_i$, it must be that
\begin{equation}
    \Phi_n(\ssl{G}_k)=\Phi_n(\ssl{G}_k^{'})
\end{equation}
with $\ssl{G}_k^{'}=\{R\ssl{P}_k+\ve{d},R\ssl{N}_k\}$.
\end{thm}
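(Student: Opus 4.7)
The plan is to show that every individual scalar that appears as a coordinate of $\ve{q}=\Phi_n(\ssl{G}_k)$ is an intrinsic geometric quantity, i.e.\ built out of distances between points, angles between edges, and angles between unit vectors, and that the rigid transformation $\ve{x}\mapsto R\ve{x}+\ve{d}$ with normals sent by $\ve{n}\mapsto R\ve{n}$ preserves every such quantity. Once this is established for the raw quantities, I would verify that the combinatorial choices made by the parameterization algorithm (which triangle is $t_1$, which edge is the longest, which vertex comes next in the chain of adjacent triangles) depend only on these same intrinsic quantities, and therefore are identical for $\ssl{G}_k$ and $\ssl{G}_k'$.

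The concrete steps I would carry out, in order, are the following. First, invoke $R\in SO(3)$ to observe that for any $\ve{x},\ve{y}\in\mathbb{R}^3$ one has $\|(R\ve{x}+\ve{d})-(R\ve{y}+\ve{d})\|=\|R(\ve{x}-\ve{y})\|=\|\ve{x}-\ve{y}\|$, so all pairwise distances between contact points are preserved; in particular the edge lengths $d$ and $d_1,\dots,d_{n-2}$ are unchanged. Second, since interior angles of a triangle are determined by its three edge lengths via the law of cosines, each $\gamma^j_1,\gamma^j_2$ is preserved. Third, for the dihedral angle $\vartheta_j$ between adjacent triangles $t_j$ and $t_{j-1}$, note that the oriented plane of a triangle with vertices $\ve{a},\ve{b},\ve{c}$ has normal proportional to $(\ve{b}-\ve{a})\times(\ve{c}-\ve{a})$; under $\ve{x}\mapsto R\ve{x}+\ve{d}$ this normal becomes $R\bigl((\ve{b}-\ve{a})\times(\ve{c}-\ve{a})\bigr)$ (using that $R\ve{u}\times R\ve{v}=R(\ve{u}\times\ve{v})$ for $R\in SO(3)$), and the inner product between two such transformed normals equals the original inner product, so $\vartheta_j$ is preserved. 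Fourth, for $\phi^k_1,\phi^k_2$, the angle between the transformed normal $R\ve{n}_k$ and the transformed triangle $t_1$ (respectively $t_2$) is $\arccos\langle R\ve{n}_k,R\ve{m}\rangle=\arccos\langle\ve{n}_k,\ve{m}\rangle$ for the appropriate reference vector $\ve{m}$ in the triangle's plane, so these are preserved as well.

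The delicate step, and the one I view as the main obstacle, is to show that the ordered assembly of these scalars into $\ve{q}$ is the same for $\ssl{G}_k$ and $\ssl{G}_k'$. The parameterization of \cite{Sintov3D2016} fixes the ordering by: (i) taking $t_1$ to be the triangle of largest area in the triangulation of $\Theta(\ssl{P})$; (ii) taking the longest edge of $t_1$ as $d$, with $t_2$ the triangle sharing this edge; and (iii) continuing along a specified chain of adjacent triangles. I would argue that the convex hull (and any canonical triangulation such as Quickhull) is equivariant under rigid motions, so the set of triangular facets of $\Theta(R\ssl{P}+\ve{d})$ is exactly the image under $\ve{x}\mapsto R\ve{x}+\ve{d}$ of the facets of $\Theta(\ssl{P})$. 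Because triangle areas and edge lengths are invariant by the first step, the facet picked as $t_1$ and the edge picked as $d$ are the same, and the chain order that follows is therefore the same. (Any tie-breaking rule used in the algorithm depends only on intrinsic quantities and hence carries over unchanged.) Likewise, the indexing of normals is inherited from the indexing of contacts, which is inherited from the triangulation order, so the tuples $(\phi^k_1,\phi^k_2)$ appear in matching positions.

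Concluding, every coordinate of $\Phi_n(\ssl{G}_k')$ equals the corresponding coordinate of $\Phi_n(\ssl{G}_k)$, and hence $\Phi_n(\ssl{G}_k)=\Phi_n(\ssl{G}_k')$, as required. I expect the routine invariance of lengths and angles to be one short paragraph in the final write-up, while the equivariance of the triangulation and of the canonical ordering convention, though conceptually clear, will require the most care to state precisely.
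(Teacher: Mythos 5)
Your proposal is correct and follows essentially the same route as the paper's proof: a rigid transformation preserves all distances between contact points and all angles (including those between the normals and the polyhedron's edges and facets), so the polyhedron and its parameterization are unchanged. The only difference is that you additionally spell out the equivariance of the triangulation and of the canonical ordering conventions (largest-area facet, longest edge, chain order), a point the paper's proof leaves implicit in the phrase that both grasps ``will be parameterized equally by map $\Phi_n$.''
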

\begin{proof}
See Appendix B.
\end{proof}
\noindent Theorem \ref{thm:frame_invariance} states that the above grasp representation is independent of any coordinate frame. Contact locations along with their normals are described relative to each other by a series of lengths and angles. The grasp representation is also not dependent on any robotic hand but only on contacts.

Measuring finger joint torques is not always possible or accurate enough and, therefore, the normals $\ssl{N}$ at the contact points are not always available. In such case, only the contacts are available and the grasp is partially defined by $\ssl{G}=\{\ssl{P}\}$. The corresponding map $\Phi_n$ outputs a vector $\ve{q}$ of size $3n-6$ only parameterizing the polyhedron formed by the contacts as previously described. In the above grasp parameterization, we have excluded 2-finger grasps ($n=2$) where the normals at the contact points have no unique parameterization. Hence, 2-finger grasps can only be parameterized with no normals by only considering the distance between the contacts (one dimensional representation). We next use the above grasp parameterization for object recognition.

\section{Object Recognition}
\label{sec:obj_classification}

In this section, we use the grasp representation described in Section \ref{sec:grasp_rep} to identify a grasped object. Given a set of $m$ objects $\{\ssl{O}_1,\ldots\ssl{O}_m\}$, we require to identify an object from the set without any use of visual feedback or tactile sensors. The query object will be identified solely by placing finger tips of a robotic hand on the surface of the object or grasping it. 
We assume that the robotic hand has access to joint angles, and its kinematics is known. Joint torques, however, are not always accessible in some robotic hands. Hence, we address both cases in which joint torques can and cannot be measured. 

Let $\theta_{f_i}$ and $\x{f_i}\in\mathbb{R}^n$ be the vectors of finger joint angles and end-tip position of finger $i$. The forward-kinematics of the hand is given by
\begin{equation}
    \x{} = \lambda(\theta)
    \label{eq:hand_kinematics}
\end{equation}
where $\theta=(\theta_{f_1}^T,\ldots,\theta_{f_n}^T)^T$. Vector $\x{}=(\x{f_1}^T,\ldots,\x{f_n}^T)^T\in\mathbb{R}^{3n}$ is the concatenation of the finger tip coordinates relative to some global frame $\ssl{H}$ on the hand. 
Furthermore, let $\tau_{f_i}$ be the vector of joint torques of finger $i$. Given joint torques vector $\tau=(\tau_{f_1}^T,\ldots,\tau_{f_n}^T)^T$ for the hand, the forces exerted by the fingers on the objects at the contacts are given by
\begin{equation}
    \ve{f}_c = \left(J_h(\theta)^\dagger\right)^{T}\tau
    \label{eq:contact_forces}
\end{equation}
where $J_h(\theta)$ is the hand Jacobian, $J_h(\theta)^\dagger$ is the pseudo-inverse of $J_h(\theta)$ and $\ve{f}_c=(\ve{f}_{f_1}^T,\ldots,\ve{f}_{f_n}^T)^T\in\mathbb{R}^{3n}$ is the concatenation vector of the forces \citep{Murray1994}. The force vectors are also expressed relative to $\ssl{H}$ and can easily be transformed to the vector normals according to $\ve{n}_i=\ve{f}_{f_i}/\|\ve{f}_{f_i}\|\in\ssl{N}$. The result is a representation of the current grasp $\ssl{G}=\{\ssl{P},\ssl{N}\}$ or $\ssl{G}=\{\ssl{P}\}$, when joint torques can and cannot be accessed, respectively. A grasp can now be mapped to the parameterization vector $\ve{q}$ and used as an input to the classifier to identify the object.


To generate a training set, we collect samples from CAD models of the objects. A model of object $\ssl{O}_l$ can be defined by a set of tuples $\ssl{F}_l=\{(\x{1}, 
\ve{n}_1), (\x{2},\ve{n}_2),\ldots\}$ defining points and corresponding normals of the object surface. The number of contacts $n$ is defined by the number of fingers in the available robotic hand. We note, however, that grasp samples are collected by only picking contact locations on the objects independent of the kinematics of some hand. For each object, we collect $N$ samples of $n$-finger grasps of the form $\ssl{G}_i$ by sampling uniformly at random of $n$ tuples from $\ssl{F}_l$. Each sample is then mapped to a parameterization vector through $\ve{q}_i=\Phi_n(\ssl{G}_i)$. The vector $\ve{q}_i$ is labeled $o_i\in\{1,\ldots,m\}$ corresponding to object $\ssl{O}_{o_i}$. To enable tactile-like recognition, we do not require force-closure grasps but only that all fingertips are on the surface of the object. Thus, we may sample both force-closure and non-force-closure grasps. It is possible, as will be shown in experiments, to add noise to the CAD models in order to make the classifier robust to inaccurate models and finger-tip measurement errors. 

The outcome of the above process is a set of observed data 
$Q_M=\{(\ve{q}_1,o_1),\ldots,(\ve{q}_M,o_M)\}$, where $M=mN$. 
Dataset $Q_M$ can now be used to train a classifier for a set of objects, i.e., train map $h:\ssl{Q}\to[0,1]^m$ such that a class probability distribution $\ve{p}=\{p_1,\ldots p_m\}$, where $\sum_m p_i=1$, is computed by $\ve{p} = h(\ve{q})$. Once a classifier is trained and validated, the identification of an object can be performed by sampling finger positions on the object. However, the parameterization described above only represents a grasp and does not include the entire information of an object. Furthermore, measuring finger joint torques is not always possible or accurate enough such that information of a grasp can be partial (two different grasps can have the same finger positions $\ssl{P}$). Therefore, a classifier, that is trained with complete information of a grasp or solely on finger position representation, may yield object recognition with low certainty. Nevertheless, this could be improved by sequentially drawing more samples.

While common classification tasks rely only on one sample for prediction, in this case, we may acquire additional samples while being certain that they originate from the same class. Consider grasp vectors arriving sequentially $\{\ve{q}_1,\ve{q}_2,\ldots\}$ in real-time by sampling from a given query object whose class is unknown. It is required to estimate conditional probability for class $\ssl{O}_i$ given $k$ samples, i.e., $P_k(\ssl{O}_i|\ve{q}_1,\ldots\ve{q}_k)$. We propose to use either an iterative process with any trained classifier or sequential Bayesian updating. An illustration of the process can be seen in Figure \ref{fig:scheme}.

\subsection{Iterative classification}
\label{sec:ic}

Proposed by \cite{Kahanowich2021}, we track the scores of the classes based on the predictions for each sample provided by any chosen classifier. An iterative classification (IC) process to do so is described in Algorithm \ref{alg:iterative}. The algorithm maintains a vector $\ve{s}=(s_1,...,s_m)$ of cumulative scores for the classes. In each iteration, a grasp is sampled and parameterized, followed by acquiring a class probability distribution $\ve{p}$ using function $h$. Function $h$ can be any trained classifier that outputs a class probability distribution. The highest probability $p_i\in\ve{p}$ is the iteration score for class $i$ and is added to $s_i$. This process is repeated until the normalized cumulative score $\hat{s}_{max}=\max(\hat{\ve{s}})$, where $\hat{s}_j=s_j/\left(\sum_i{s}_i\right)\in\hat{\ve{s}}$, for some class reaches above a lower bound $\lambda_s\in[0,1]$. Normally, $\lambda_s$ will be chosen to be around $0.85-0.98$ for efficient performance. One may view $\hat{s}_j\in\ve{s}$ as the probability approximation for grasping $\mathcal{O}_j$ after some number $k$ of iterations, i.e., $\hat{s}_j\approx P_k(\mathcal{O}_j|\ve{q}_1,\ldots\ve{q}_k)$.

It is also possible to fully accumulate all class scores by updating $\ve{s}$ with all iteration probabilities, i.e., replace lines 6-7 in Algorithm \ref{alg:iterative} with $\ve{s}\leftarrow\ve{s}+\ve{p}$. However, this will result in an excessive number of iterations for ill-trained classifiers while requiring a carefully tuned $\lambda_s$. Also, preliminary results show that for a more accurate classifier, this would provide only marginal accuracy improvement with a higher number of iterations.

\begin{algorithm}
    \caption{$\mathtt{iterative\_classification} (\lambda_s)$}
    \label{alg:iterative}
    \SetAlgoLined
    Initiate elements of $\ve{s}=(s_1,...,s_m)$ to $0$\; 
    \Repeat{$\hat{s}_{max} > \lambda_s$}{
    Sample grasp $\ssl{G}$\;
    $\ve{q}\leftarrow\Phi_n(\ssl{G})$\;
    $\ve{p}\leftarrow h(\ve{q})$\; 
    $i\leftarrow arg\max(\ve{p})$\;
    $s_i\leftarrow s_i + p_i$\;
    $o\leftarrow arg\max(\ve{s})$\;
    \If{first iteration}{
        $\hat{s}_{max}\leftarrow s_o$\;}
    \Else{
        $\hat{s}_{max}\leftarrow s_o/\left(\sum_i{s}_i\right)$\;}}
    \Return{$o$}  \tcc*{return class index}
\end{algorithm}

Let $P_h(l=j|\mathcal{O}_i)$ be the probability for classifier $h$ to assign label $j$ to a grasped object $\mathcal{O}_i$ such that 
\begin{equation}
    \sum_{j=1}^m P_h(l=j|\mathcal{O}_i)=1.
\end{equation}
\begin{mydef}
\label{def:sufficient_classifier}
Classifier $h$ is sufficient if it satisfies
\begin{equation}
    P_h(l=i|\mathcal{O}_i)>P_h(l=j|\mathcal{O}_i)
    \label{eq:P>P}
\end{equation}
for any $i,j\in\{1,...,m\}$ and $j\neq i$. 
\end{mydef}
\noindent The above definition states that a sufficient classifier is more likely to correctly identify all objects in the set. Naturally, higher values of $P_h(l=i|\mathcal{O}_i)$ for all $i\in\{1,\ldots,m\}$ mean a more accurate classifier. A classifier may not be sufficient due to limited data, incompatible classification algorithm or non-optimal NN architecture. It is worth mentioning that, in practice, the $(j,i)$ component of a classifier's confusion matrix is an approximation of $P_h(l=j|\mathcal{O}_i)$.

\begin{thm}
Given a sufficient classifier $h$, it must be that the expected normalized cumulative score $\mathbb{E}(\hat{s}_i|\mathcal{O}_i)$ satisfies
\begin{equation}
    \mathbb{E}(\hat{s}_i|\mathcal{O}_i)>\mathbb{E}(\hat{s}_j|\mathcal{O}_i),~~ \forall j\neq i,
    \label{eq:s>s}
\end{equation}
when using Algorithm \ref{alg:iterative}.
\end{thm}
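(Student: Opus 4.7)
My plan is to exploit the i.i.d.\ structure of the iterations: since each grasp $\ssl{G}_k$ is sampled independently from the same object $\mathcal{O}_i$, the per-iteration pairs $(Y_k,W_k)$ with $Y_k=\arg\max(\ve{p}_k)$ and $W_k=\max(\ve{p}_k)$ are i.i.d. Lines 5--7 of Algorithm \ref{alg:iterative} amount to $s_{Y_k}\leftarrow s_{Y_k}+W_k$, so after $K$ iterations
\begin{equation*}
s_j^{(K)} = \sum_{k=1}^K W_k\,\mathbf{1}[Y_k=j], \qquad \sum_l s_l^{(K)} = \sum_{k=1}^K W_k.
\end{equation*}
Dividing numerator and denominator by $K$ and invoking the strong law of large numbers, the normalized score converges almost surely to $\hat{s}_j^{(K)} \to \alpha_j/\sum_l\alpha_l$, where $\alpha_j:=\mathbb{E}[W\,\mathbf{1}[Y=j]\mid\mathcal{O}_i]$ is the expected per-iteration contribution to $s_j$. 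Since $\hat{s}_j^{(K)}\in[0,1]$, dominated convergence lets this limit also equal the long-run $\mathbb{E}(\hat{s}_j\mid\mathcal{O}_i)$, so the claim \eqref{eq:s>s} reduces to proving $\alpha_i>\alpha_j$ for every $j\neq i$.

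Next I would decompose $\alpha_j = P_h(l=j\mid\mathcal{O}_i)\cdot\mathbb{E}[W\mid Y=j,\mathcal{O}_i]$ and invoke the sufficient-classifier hypothesis \eqref{eq:P>P}, which directly supplies $P_h(l=i\mid\mathcal{O}_i) > P_h(l=j\mid\mathcal{O}_i)$. Because $W=\max(\ve{p})$ deterministically lies in $[1/m,1]$, the natural finishing step is a reinforcing monotonicity---a classifier that outputs the correct label more often is also, on average, at least as confident when correct as when wrong, i.e., $\mathbb{E}[W\mid Y=i,\mathcal{O}_i]\ge \mathbb{E}[W\mid Y=j,\mathcal{O}_i]$. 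Combined with the frequency inequality, this yields $\alpha_i>\alpha_j$ and hence the theorem.

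The main obstacle is precisely this confidence-monotonicity gap: Definition \ref{def:sufficient_classifier} constrains only how \emph{often} each label is produced, not the magnitude of $W$ conditioned on the label. Without the monotonicity hypothesis above, one can fall back on the cruder bounds $\alpha_i \ge P_h(l=i\mid\mathcal{O}_i)/m$ and $\alpha_j\le P_h(l=j\mid\mathcal{O}_i)$ (both from $W\in[1/m,1]$), which still suffice whenever $P_h(l=i\mid\mathcal{O}_i)>m\cdot P_h(l=j\mid\mathcal{O}_i)$---a strengthened form of sufficiency. I expect the author's proof to either assume this confidence property implicitly or to strengthen Definition \ref{def:sufficient_classifier} accordingly.
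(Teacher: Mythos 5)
Your proposal is correct and follows essentially the same route as the paper: the paper likewise decomposes the expected normalized score into frequency times conditional confidence, $\mathbb{E}(\hat{s}_j|\mathcal{O}_i)=\mathbb{E}(p_{max}|l=j,\mathcal{O}_i)\,P_h(l=j|\mathcal{O}_i)$ (its \eqref{eq:expected_values_s}), and then combines \eqref{eq:P>P} with a confidence inequality. The gap you flagged is handled exactly as you predicted: the paper does not derive confidence monotonicity from Definition \ref{def:sufficient_classifier} but assumes it outright, citing the empirical observation that softmax confidence tends to be lower on misclassified inputs \citep{Hendrycks2017}, i.e., $\mathbb{E}(p_{max}|l=i,\mathcal{O}_i)>\mathbb{E}(p_{max}|l=j,\mathcal{O}_i)$. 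Where you differ is in the fallback: the paper's weaker alternative assumes $p_{max}\sim U(\tfrac{1}{m},1)$ for every prediction, which is precisely the equality case of your monotonicity hypothesis, so \eqref{eq:P>P} alone yields the claim; your crude bounds $\alpha_i\ge P_h(l=i|\mathcal{O}_i)/m$ and $\alpha_j\le P_h(l=j|\mathcal{O}_i)$ avoid any assumption on the conditional confidence but require the much stronger condition $P_h(l=i|\mathcal{O}_i)>m\,P_h(l=j|\mathcal{O}_i)$, which the paper never needs. Your law-of-large-numbers treatment of the normalization is actually more careful than the paper's: \eqref{eq:expected_values_s} asserts that the expected normalized score equals the product directly, which glosses over the fact that the expectation of a ratio is not the ratio of expectations, whereas your almost-sure limit $\hat{s}_j\to\alpha_j/\sum_l\alpha_l$ makes the same conclusion rigorous in the many-iteration regime in which the theorem is actually used.
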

\begin{proof}
Let us assume, for now, that prediction probability of incorrect predictions tends to be lower than the prediction probability for correct ones \citep{Hendrycks2017}. Hence, given $p_{max}=\max(\ve{p})$, the expected value for $p_{max}$ when successfully classifying object $\mathcal{O}_i$ would be larger than an erroneous prediction. That is, statement 
\begin{equation}
    \mathbb{E}(p_{max}|l=i,\mathcal{O}_i)>\mathbb{E}(p_{max}|l=j,\mathcal{O}_i)
    \label{eq:expected_values_p}
\end{equation}
holds for any $j\neq i$. According to Algorithm $\ref{alg:iterative}$, vector $\ve{s}$ accumulates scores for class predictions with the increase of iterations. In addition, a score is given to $s_j$ only if label $l=j$ is assigned to the query object in a particular iteration. Hence, the expected normalized value $\hat{s}_j$ of component $s_j\in\ve{s}$ given object $\mathcal{O}_i$ is
\begin{equation}
    \mathbb{E}(\hat{s}_j|\mathcal{O}_i)=\mathbb{E}(p_{max}|l=j,\mathcal{O}_i)P_h(l=j|\mathcal{O}_i)
    \label{eq:expected_values_s}
\end{equation}
for any $j\in\{1,\dots,m\}$. From Definition \ref{def:sufficient_classifier} and \eqref{eq:expected_values_p}-\eqref{eq:expected_values_s}, it must be that \eqref{eq:s>s} is satisfied. Even with a more strict assumption, where $p_{max}$ for any prediction (erroneous or not) has a uniform distribution $p_{max}\sim U(\frac{1}{m},1)$ such that
\begin{equation}
     \mathbb{E}(p_{max}|l=i,\mathcal{O}_i)=\mathbb{E}(p_{max}|l=j,\mathcal{O}_i)=\frac{m+2}{2m},
     \label{eq:uniform_p_max}
\end{equation}
statement \eqref{eq:s>s} remains valid due to \eqref{eq:P>P} and \eqref{eq:expected_values_s}. 
\end{proof}


The above theorem implies that as long as a classifier satisfies Definition \ref{def:sufficient_classifier} and while holding object $\mathcal{O}_i$, the cumulative score $\hat{s}_i$ will increase and converge to an higher value than $\hat{s}_j$ ($j\neq i$), with the increase of iterations, i.e., $\hat{s}_i$ is more likely to be maximal and equal to $\hat{s}_{max}$. Hence, the certainty about the prediction will grow with the addition of more samples. In turn, this will result in continuous improvement of the classifiers success rate. Let us assume that some classifier is not sufficient such that $m_p\in\{0,1,\ldots,m\}$ is the number of grasped objects that does satisfy \eqref{eq:P>P}. When $m_p<m$, the classification success rate will increase only for these objects while declining for the remaining $m-m_p$ ones. This means that the  success rate upper limit for a certain classifier is \begin{equation}
\eta=\frac{m_p}{m} \times 100\%.
\label{eq:bound}
\end{equation} 
If a classifier is sufficient such that $m_p=m$, the total success rate would converge to 100\% with the increase of iterations. The convergence rate depends on the quality of the classifier, that is, on the accuracy $P_h(l=i|\mathcal{O}_i)$ and whether \eqref{eq:expected_values_p} is satisfied. Hence, the proper amount of iterations to reach some level of accuracy or certainty is not known beforehand. Consequently, we cannot set the termination criterion to some arbitrary number of iterations and Algorithm \ref{alg:iterative} terminates when reaching some certainty above a threshold $\lambda_s$.

\subsection{Sequential Bayesian Update}
\label{sec:sbu}

In an alternative approach, we consider Bayesian classification (BC). We assume a Markovian model where samples are mutually independent such that $P(\ve{q}_k|\ssl{O}_t, \ve{q}_1,\ldots,\ve{q}_{k-1})=P(\ve{q}_k|\ssl{O}_t)$. The joint probability density function  given $k$ samples is, therefore, expressed by the Bayes rule
\begin{equation}
    P_k(\ssl{O}_t|\ve{q}_1,\ldots\ve{q}_k)\propto P(\ssl{O}_t) \prod_{i=1}^k P(\ve{q}_i|\ssl{O}_t),
\end{equation}
where $P(\ssl{O}_t)$ is the prior distribution before observing the samples \citep{Kay1993}. Probability $P(\ve{q}_i|\ssl{O}_t)$ is the likelihood of observing sample $\ve{q}_i$ given object $\ssl{O}_t$. The likelihood can be approximated by using the Kernel Density Estimator (KDE) while observing the data for each object \citep{parzen1962}. KDE is a non-parametric method to learn and estimate a probabilistic density function automatically from data. Given the $M_t$ independent and identically distributed training samples $\{\ve{q}^{(t)}_1,\ldots,\ve{q}^{(t)}_{M_t}\}\subset Q_M$ corresponding to object $\ssl{O}_t$, KDE is formally defined as
\begin{equation}
    P(\ve{q}_i|\ssl{O}_t)=\frac{1}{M_t}\sum_{j=1}^{M_t} K_\sigma\left(\ve{q}_i - \ve{q}^{(t)}_j\right)
\end{equation}
where $K_\sigma:\mathbb{R}^w\to\mathbb{R}$ is a smooth function, termed the \textit{kernel} function, with bandwidth $\sigma>0$. A common choice, as in this work, is a Gaussian kernel given by
\begin{equation}
    K_\sigma(\ve{x}) = \frac{1}{\sqrt{2\pi}}e^{\left(-\frac{1}{2\sigma^2}\ve{x}^2\right)}.
\end{equation}

Algorithm \ref{alg:bayesian_classification} describes the iterative process with some resemblance to Algorithm \ref{alg:iterative}. For each sampled and parameterized grasp $\ve{q}$, the likelihood $P(\ve{q}|\ssl{O}_i)$ for each object ($i=1,\ldots,m$) is computed, followed by updating the posterior probability vector $\ve{p}$. This process is repeated until the posterior probability for some class reaches above a lower bound $\lambda_p\in[0,1]$. Here also, $\lambda_p$ should be chosen to be around $0.85-0.98$ for efficient performance. The prior probability distribution for $\ve{p}_{prior}=\left(P(\ssl{O}_1),\ldots,P(\ssl{O}_m)\right)$ can either be chosen naively (Naive Prior - NP) to be $P(\ssl{O}_i) = \frac{1}{m}$ ($i\in\{1,\ldots,m\}$) or based on the initial prediction of some other classifier $h$ previously trained on the data (Initial Prior - IP). 

\begin{algorithm}
    \caption{$\mathtt{bayesian\_classification} (\lambda_p, \ve{p}_{prior})$}
    \label{alg:bayesian_classification}
    \SetAlgoLined
    Initiate $\ve{p}=(p_1,...,p_m)$ to $\ve{p}_{prior}$\;
    \Repeat{$p_{o} > \lambda_p$}{
    Sample grasp $\ssl{G}$\;
    $\ve{q}\leftarrow\Phi_n(\ssl{G})$\;
    \For{$i=1 \to m$}{
        Compute $P(\ve{q}|\ssl{O}_i)$ using KDE\;
        ${p}_i\leftarrow {p}_i \cdot P(\ve{q}|\ssl{O}_i)$\;
    }
    $\ve{p}\leftarrow \ve{p}/\left(\sum_i{p}_i\right)$  \tcc*[r]{Norm. to sum. 1}
    $o\leftarrow arg\max(\ve{p})$\;
    }
    \Return{$o$}  \tcc*{return class index}
\end{algorithm}

\subsection{Scale invariant grasp representation}
\label{sec:scale}

While the grasp parameterization described in Section \ref{sec:grasp_rep} is injective and independent of any coordinate system, it remains dependent of object size. Two objects of the same shape but with different size will have a non-equal parameterization vector resulting in a non-scalable classifier. Hence, a model trained on a set of objects will not be able to accurately classify a scaled version of the same objects. A solution for this would be to normalize the grasp. Hence, we first define the scaling of an object.
\begin{mydef}
Object $\ssl{O}_i$ is uniformly scaled by $\xi>0~$ to $~\ssl{O}_i^{(\xi)}$ if every point $\ve{p}_k\in\ssl{O}_i$ is mapped to $\xi\cdot\ve{p}_k$, such that $\xi\cdot\ve{p}_k\in\ssl{O}_i^{(\xi)}$.
\end{mydef}
\noindent Throughout this paper, uniform scaling is referred as scaling. Next, we define the scaling of a grasp.
\begin{mydef}
Let $\ssl{G}_i$ be a grasp of object $\ssl{O}_i$ at contact points $\ssl{P}_i$. A scaled grasp $\ssl{G}_i^{(\xi)}$ corresponds to grasping of object $\ssl{O}_i^{(\xi)}$ at contact points $\xi\cdot\ssl{P}_i$.
\end{mydef}
\noindent The above definition addresses the scaling of the grasp polyhedron $\Theta(\ssl{P}_i)$ by scaling the vertices by $\xi$. The direction of the normals at the contact points do not change, i.e., scaling of grasp $\ssl{G}=\{\ssl{P},\ssl{N}\}$ by $\xi$ results in $\ssl{G}^{(\xi)}=\{\xi\cdot\ssl{P},\ssl{N}\}$. We note that scaling of a grasp can also be accompanied with rotation and translation of $\ssl{G}$ while not affecting the parameterization as described in Theorem \ref{thm:frame_invariance}. Without loss of generality, we omit possible transformation when scaling a grasp for simplicity of presentation in the remaining part of the paper.

Let $a_{vu}$ be the surface area of facet $u$ on polyhedron $\Theta(\ssl{P}_v)$ ($u=1,...,U$ where $U$ is the number of facets of $\Theta(\ssl{P}_v)$), we define $A_v=\sqrt{\sum_{u=1}^Ua_{vu}}$ to be the square root of the total surface area of polyhedron $\Theta(\ssl{P}_v)$. We now argue that the scaling of two grasps corresponding to scaled objects, each by its own squared total surface area, will result in a scale invariant grasp representation.
\begin{thm}
\label{thm:equal_phi}
Given grasps $\ssl{G}_i$ and $\ssl{G}_j$ of objects $\ssl{O}_i$ and $\ssl{O}_j$, respectively, where some unknown scaling factor $\xi$ exists such that $\ssl{O}_j\equiv\ssl{O}_i^{(\xi)}$ and $\ssl{G}_j=\ssl{G}_i^{(\xi)}$. By using scaling factors $A_i^{-1}$ and $A_j^{-1}$, it must be that 
\begin{equation}
    \Phi_n\left(\ssl{G}_i^{\left(A_i^{-1}\right)}\right)=\Phi_n\left(\ssl{G}_j^{\left(A_j^{-1}\right)}\right)
\end{equation}
for any number of fingers $n$.
\end{thm}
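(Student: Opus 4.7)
My plan is to reduce the claim to the observation that the two normalized grasps are actually \emph{identical} as grasps (up to at most a rigid motion), after which the conclusion follows immediately: since $\Phi_n$ depends only on the grasp, and by Theorem \ref{thm:frame_invariance} does not see any rigid motion, identical normalized grasps must be mapped to the same parameterization vector.

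The first step is to unpack the scaling hypothesis. From $\ssl{G}_j=\ssl{G}_i^{(\xi)}$ we read off $\ssl{P}_j=\xi\cdot\ssl{P}_i$ and $\ssl{N}_j=\ssl{N}_i$; uniform scaling preserves the direction of the surface normals. The second step is to compute how the normalizer $A_v$ transforms under scaling: each facet of $\Theta(\ssl{P}_j)$ is obtained from the corresponding facet of $\Theta(\ssl{P}_i)$ by scaling two independent edges by $\xi$, so its area is multiplied by $\xi^2$. Summing over all facets and extracting the square root yields
\begin{equation}
    A_j^2=\sum_{u=1}^U a_{ju}=\xi^2\sum_{u=1}^U a_{iu}=\xi^2 A_i^2,
\end{equation}
hence $A_j=\xi A_i$.

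The third and decisive step is to compare the two normalized grasps directly. The contacts of $\ssl{G}_j^{(A_j^{-1})}$ are
\begin{equation}
    A_j^{-1}\ssl{P}_j=(\xi A_i)^{-1}(\xi\cdot\ssl{P}_i)=A_i^{-1}\ssl{P}_i,
\end{equation}
which are exactly the contacts of $\ssl{G}_i^{(A_i^{-1})}$, and the normals agree because scaling leaves normals unchanged. Consequently $\ssl{G}_i^{(A_i^{-1})}=\ssl{G}_j^{(A_j^{-1})}$ as grasps, so the two sides of the claimed identity are the same vector. The final step is simply to invoke $\Phi_n$ and close the argument; the generalization to arbitrary $n\geq 3$ is automatic because every step above is pointwise in the contacts and does not refer to a particular finger count.

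The only subtlety I anticipate is the author's earlier convention that scaling may silently be composed with a rotation and translation. In that case the two normalized grasps coincide only modulo a rigid motion of $SO(3)\ltimes\mathbb{R}^3$, but Theorem \ref{thm:frame_invariance} is exactly what bridges this gap, so the equality $\Phi_n(\ssl{G}_i^{(A_i^{-1})})=\Phi_n(\ssl{G}_j^{(A_j^{-1})})$ still holds. No deeper analysis of the chain-of-triangles parameterization itself seems necessary, since the proof reduces the scale-invariance claim to the frame-invariance result already in hand.
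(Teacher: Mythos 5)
Your proposal is correct and follows essentially the same route as the paper's own proof: both hinge on the fact that the total-surface-area normalizer scales linearly, $A_j=\xi A_i$, so scaling each grasp by $A_v^{-1}$ cancels the unknown $\xi$ and yields identical polyhedrons (the paper phrases this via similarity and the edge-to-$\sqrt{\text{area}}$ ratio $e_{jk}/A_j=e_{ik}/A_i$, whereas you cancel $\xi$ directly in the contact coordinates). Your explicit handling of the unchanged normals and of a possible accompanying rigid motion via Theorem \ref{thm:frame_invariance} is consistent with what the paper states in the text surrounding the theorem rather than inside its proof.
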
 
\begin{proof}
See Appendix B.
\end{proof}
Theorem \ref{thm:equal_phi} states that a scale invariant training set can be generated by scaling each grasp of an object with the total surface area of the formed polyhedron. Then, a trained classifier can identify the scaled objects corresponding to the original ones. The scale invariance property could remove the need for recollection and retraining of a classifier for a new object that is a scaled version of an already known one. In addition, scaling can enable generalization for geometry classification as will be shown in the results.


\subsection{Embedded $z$-finger grasps}
\label{sec:z-grasps}

While sampling an $n$-finger grasp, one may consider it as a set of possible $z$-finger grasps, where $z<n$. In an $n$-finger grasp, there are $c_{n,z}$ combinations of $z$-finger grasps given by
\begin{equation}
    c_{n,z}={n \choose z}=\frac{n!}{z!(n-z)!}.
\end{equation}
Therefore, we may use a classifier trained over $z$-finger grasps to better exploit the information given by a single $n$-finger sample. Hence, potentially reduce the overall number of physical grasp samples. Algorithms \ref{alg:iterative_3_in_n} and \ref{alg:bayesian_classification_3_in_n} present the modifications of the IC and BC algorithms, respectively, to use $z$-finger grasps. 
For every grasp sample $\ssl{G}$, all possible combinations of $z$-finger grasps $\ssl{G}^z_j$ ($j\in\{1,...,c_{n,z}\}$) are used to update classification certainty as described in Sections \ref{sec:ic} and \ref{sec:sbu}.

\begin{algorithm}
    \caption{$\mathtt{iterative\_z\_classification} (\lambda_s)$}
    \label{alg:iterative_3_in_n}
    \SetAlgoLined
    Initiate elements of $\ve{s}=(s_1,...,s_m)$ to $0$\; 
    \While{$True$}{
        Sample grasp $\ssl{G}$\;
        $\{\ssl{G}^z_1,\ldots,\ssl{G}^z_{c_{n,z}}\}\leftarrow$ all $z$-finger combinations in $\ssl{G}^n$\;
        \For{$j = 1 \to c_{n,z}$}{
            $\ve{q}\leftarrow\Phi_z(\ssl{G}^z_j)$\;
            $\ve{p}\leftarrow h(\ve{q})$\; 
            $i\leftarrow arg\max(\ve{p})$\;
            $s_i\leftarrow s_i + p_i$\;
            $o\leftarrow arg\max(\ve{s})$\;
            \If{first iteration}{
                $\hat{s}_{max}\leftarrow s_o$\;}
            \Else{
                $\hat{s}_{max}\leftarrow s_o/\left(\sum_i{s}_i\right)$\;}
            \If{$\hat{s}_{max} > \lambda_s$}{
                \Return{$o$}  \tcc*{ret. class index}
            }
        }
    }
    
\end{algorithm}

\begin{algorithm}
    \caption{$\mathtt{bayesian\_}z\mathtt{\_classification} (\lambda_p, \ve{p}_{prior})$}
    \label{alg:bayesian_classification_3_in_n}
    \SetAlgoLined
    Initiate $\ve{p}=(p_1,...,p_m)$ to $\ve{p}_{prior}$\;
    \While{$True$}{
        Sample grasp $\ssl{G}^n$\;
        $\{\ssl{G}^z_1,\ldots,\ssl{G}^z_{c_{n,z}}\}\leftarrow$ all $z$-finger combinations in $\ssl{G}^n$\;
        \For{$j = 1 \to c_{n,z}$}{
            $\ve{q}\leftarrow\Phi_z(\ssl{G}^z_j)$\;
            \For{$i=1 \to m$}{
                Compute $P(\ve{q}|\ssl{O}_i)$ using KDE\;
                ${p}_i\leftarrow {p}_i \cdot P(\ve{q}|\ssl{O}_i)$\;
            }
            $\ve{p}\leftarrow \ve{p}/\left(\sum_i{p}_i\right)$\;  
            $o\leftarrow arg\max(\ve{p})$\;
            \If{$p_{o} > \lambda_p$}{
                \Return{$o$}  \tcc*{ret. class index}
            }
        }
    }
\end{algorithm}

Considering a lower number of fingers than the hand has can also be useful in exploiting incomplete grasps. An $n$-finger robotic hand may not always be able to make contact of all fingers with the object. Feedback for lack of finger contact can be acquired from joint torques or some low-level haptics, if available. In such case, the $z<n$ fingers that achieved contact can also provide viable information without the need of a new grasp. Hence, classification using either IC or BC can be done with a sampling sequence of an heterogeneous number of fingers, e.g., grasp sequence $\{\ssl{G}^n, \ssl{G}^{z_1}, \ssl{G}^{z_2}, \ssl{G}^n, \ssl{G}^{z_3},\ldots\}$ where $z_j<n$ may be acquired. Each $\ssl{G}^j$ is parameterized and classified based on the corresponding trained model.


\section{Experimental Results and Analysis}
\label{sec:results}

In this section, we test and analyse the proposed method over a set of distinct objects while not considering the kinematics of the robotic hand. Experiments with robotic hands are presented in Section \ref{sec:experiments}. We have picked eight objects from the KIT object models database \citep{Kasper2012} shown in Figure \ref{fig:KITobjects}. The KIT database is a set of objects scanned with a 3D digitizer, resulting in a triangular mesh of 800 faces for each object. The eight objects were selected because they offer a wide diversity of shapes while some have irregular ones (the \textit{ToyCarYelloq} and the \textit{CatLying}), and others have relatively similar shapes that are challenging to make a distinction (\textit{Shampoo} and \textit{ShowerGel}). We first present the process of data collection and training of various classifiers over the set of CAD models. In Section \ref{sec:classification_naive}, we present results for object recognition using IC and BC. Further, we test the scalability property and the use of $z$-finger grasps. Finally, we demonstrate the ability to classify types of geometries over a larger set of objects.

\begin{table}[ht]
    \centering
    \begin{tabular}{cc}
        \includegraphics[width=0.51\linewidth]{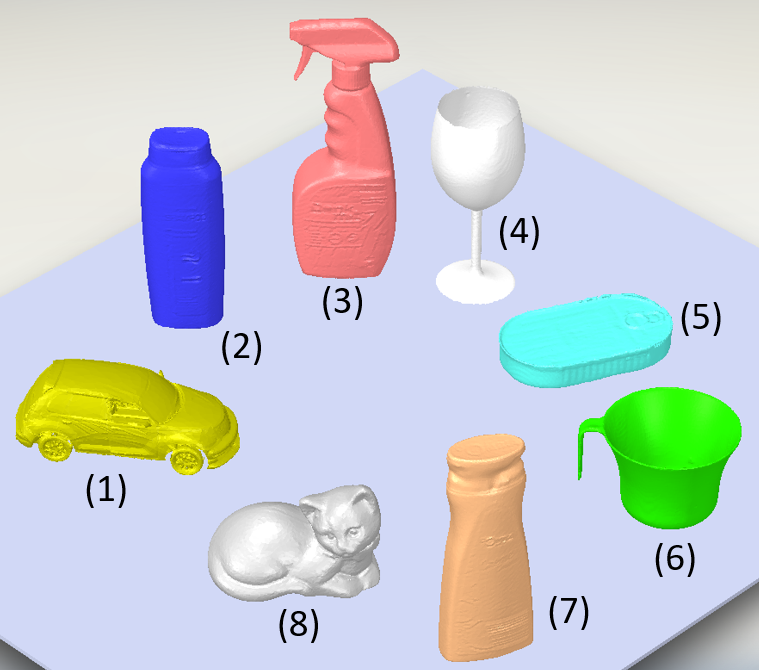} & 
    \begin{tabular}[b]{|l|l|}\hline
           & Object  \\\hline
        (1)  & ToyCarYelloq \\
        (2)  & Shampoo      \\
        (3)  & SprayFlask   \\
        (4)  & Wineglass    \\
        (5)  & HeringTin    \\
        (6)  & GreenCup     \\
        (7)  & ShowerGel    \\
        (8)  & CatLying     \\\hline
        \end{tabular} \\
    \end{tabular}
    \captionof{figure}{Eight objects from the KIT object models database used in the experiments.}
    \label{fig:KITobjects}
\end{table}

\subsection{Data collection and model training}
\label{sec:data_collection}

\begin{table*}[]
\centering
\caption{Success rate comparison for different classifiers}
\label{tb:classifiers}
\begin{tabular}{|lc||cc|cc|cc|}\hline
Num. fingers         && \multicolumn{2}{c|}{3} & \multicolumn{2}{c|}{4} & \multicolumn{2}{c|}{5} \\
normals              && w/   & w/o       & w/        & w/o       & w/     & w/o    \\\hline
\multicolumn{1}{|l|}{Nearest Neighbors}  &\multirow{7}{*}{\small \%}
                                       & 70.8 & 45.6 & 59.1 & 56.1 & 62.2 & 61.8 \\
\multicolumn{1}{|l|}{Linear SVM}      && 41.2 & 41.0 & 50.0 & 49.6 & 55.6 & 55.7 \\
\multicolumn{1}{|l|}{RBF SVM}         && 76.9 & 46.9 & 47.6 & 57.8 & 37.3 & 63.5 \\
\multicolumn{1}{|l|}{Gaussian Process}              && 69.5 & 42.7 & 59.0 & 57.3 & 64.7 & 63.8 \\
\multicolumn{1}{|l|}{Random Forests}   && 66.8 & 47.3 & 58.7 & 57.7 & 64.6 & 63.3 \\
\multicolumn{1}{|l|}{Neural Network}              && \cellcolor[HTML]{C0C0C0}86.0 & \cellcolor[HTML]{C0C0C0}47.0 & \cellcolor[HTML]{C0C0C0}80.7 & \cellcolor[HTML]{C0C0C0}62.4 & \cellcolor[HTML]{C0C0C0}83.7 & \cellcolor[HTML]{C0C0C0}71.1 \\
\multicolumn{1}{|l|}{Decision Trees}      && 61.9 & 45.7 & 58.2 & 54.4 & 61.8 & 61.0 \\
\hline
\end{tabular}
\end{table*}

The KIT objects are represented as triangular meshes. In the context of possible contact locations, each mesh is converted to a set $\ssl{F}_l$ of contact locations (centers of the triangles) and the normals at the contacts (normals to the triangles) directed inwards. We generate a training set as described in Section \ref{sec:obj_classification}. An $n$-finger grasp is sampled by randomly selecting $n$ elements from $\ssl{F}_l$ (with or without normals). A training set is acquired by sampling and parameterizing $N=200,000$ $n$-finger grasps for each object, and labeling them. We note that 15\% of the training data was used for validation during training. The testing phase, from which we present results, was performed after training and done by directly sampling grasps from the objects. The average generation time of data for each object is 4.72 minutes computed on an Intel-Core i7-9700 Ubuntu machine with 16 GB of RAM. 

For IC and for the prior of BC-IP, we have trained and tested different classifiers, including: Nearest-Neighbors, Support Vector Machines (SVM), Gaussian Processes, Random Forests, Neural-Network and Decision Trees. The SVM classifier was trained using linear and Radial Basis Function (RBF) kernels. The NN architectures were formed and optimized individually for different input dimensions, depending on the number of fingers and whether the normals were included. For example, a 4-finger grasp with normals, that yields an input dimension of 14, used a deep NN of seven hidden layers of 500 width. A 9-finger grasp with normals and parameterization vector of dimension 39 reached the best results with a NN of eight layers and width of 289 neurons. The networks all used a Rectified Linear Unit (ReLU) activation function and were trained with the back-propagation algorithm.

Table \ref{tb:classifiers} reports the classification success rate of 1,000 trials per object with 3-, 4- and 5-finger grasps directly sampled from the objects. We note that these are the rates for a one-shot without any iterative classification, i.e., classification based on a single grasp sample. Most of the classification techniques performed poorly, while the NN outperforms. However, it is clear that a single grasp sample generally cannot be used to reliably classify an object. As discussed in Section \ref{sec:obj_classification}, one grasp and its parameterization does not embed the entire information of the object and is not sufficient. Hence, additional grasps can be used to increase the certainty about the object. We use the NN classifier in IC and a prior for BC-IP in further experiments. Results for iterative object recognition are presented next.

\begin{figure*}[ht]
    \centering
    \setlength{\tabcolsep}{4pt} 
    \resizebox{\textwidth}{!}{%
    \begin{tabular}{cccc}
        \includegraphics[width=0.22\textwidth]{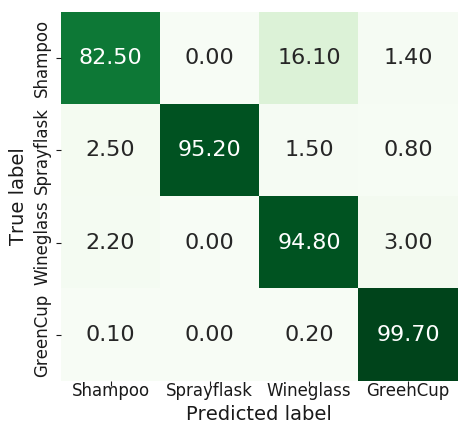} & \includegraphics[width=0.22\textwidth]{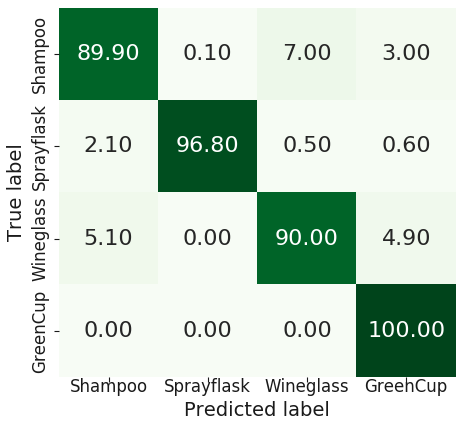} & \includegraphics[width=0.22\textwidth]{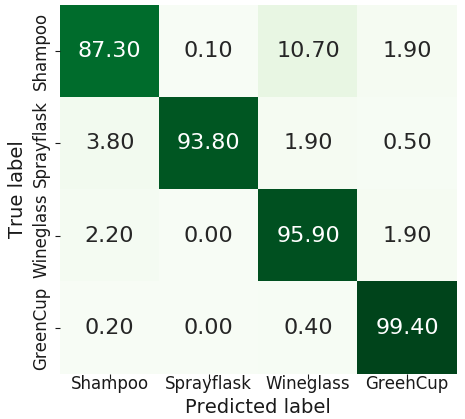} & \includegraphics[width=0.26\textwidth]{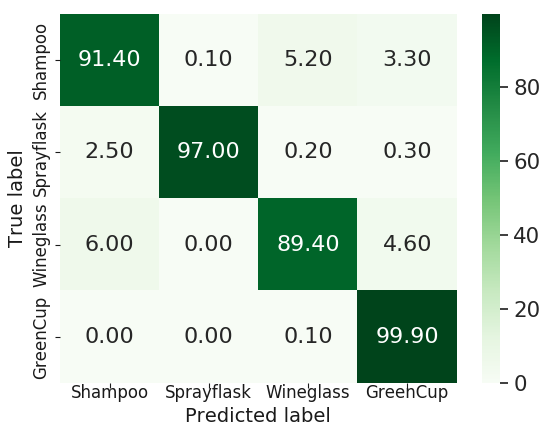} \\
        (a) & (b) & (c) & (d) \\
    \end{tabular}}
    \caption{Confusion matrices for classification of four objects using BC-NP. The results are for 1,000 query trials per object. Classification with a 4-finger grasp (a) with and (b) without the normals at the contact points, and a 5-finger grasp (c) with and (d) without the normals. Average number of re-sampling iterations is (a) 4.87, (b) 6.52, (c) 3.95 and (d) 5.21, for $\lambda_p=0.85$.}
    \label{fig:CM_BC_NP}
\end{figure*}

\begin{figure*}[ht]
    \centering
    \setlength{\tabcolsep}{3pt} 
    \resizebox{\textwidth}{!}{%
    \begin{tabular}{cccc}
        \includegraphics[width=0.23\textwidth]{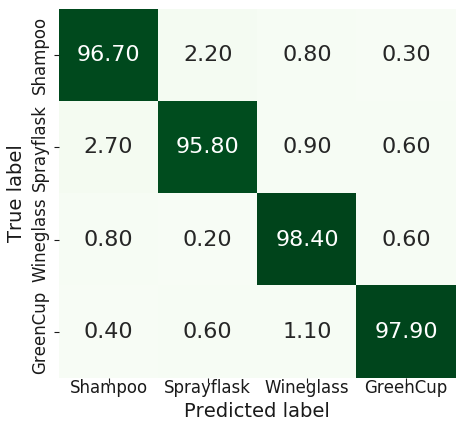} & \includegraphics[width=0.23\textwidth]{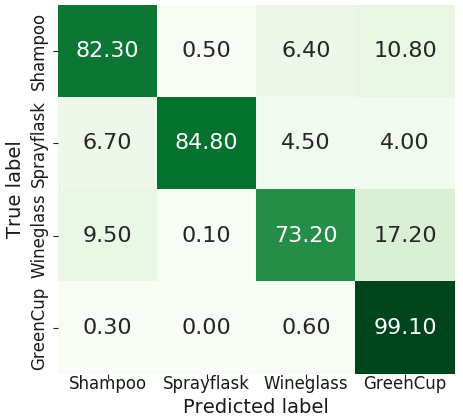} & \includegraphics[width=0.23\textwidth]{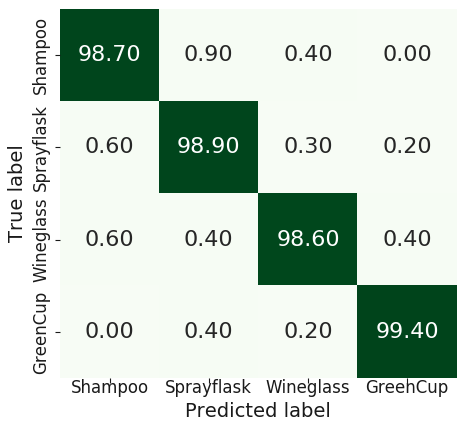} & \includegraphics[width=0.27\textwidth]{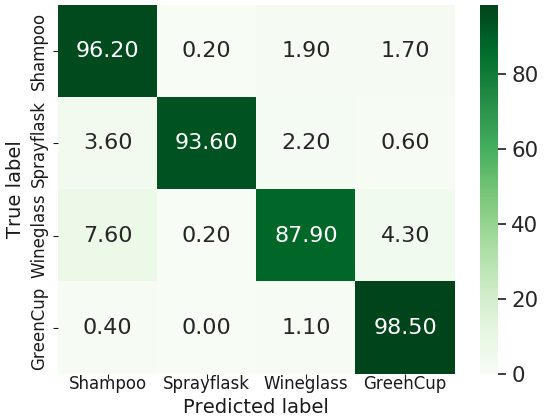} \\
        (a) & (b) & (c) & (d) \\
    \end{tabular}}
    \caption{Confusion matrices for classification of four objects using IC. The results are for 1,000 query trials per object. Classification with a 4-finger grasp (a) with and (b) without the normals at the contact points, and a 5-finger grasp (c) with and (d) without the normals. Average number of re-sampling iterations is (a) 4.35, (b) 3.6, (c) 4.67 and (d) 3.97, for $\lambda_p=0.85$.}
    \label{fig:CM_IC}
\end{figure*}

\subsection{Object Recognition}
\label{sec:classification_naive}

In this section, we experimentally show and analyse the success rate and the required number of samples to classify an object through contact locations, with and without including the normals at the contacts. For both IC and BC, we have manually tested different values for $\lambda_s$ and $\lambda_p$. The value $\lambda_s=\lambda_p=0.85$ has yielded sufficient results, both in terms of success rate and minimum number of grasp samples. Surely this number can be higher to improve the success rate at the cost of more grasp samples. We use this value in the following experiments while later showing an analysis of the success rate with regards to the number of grasp samples.

For a clear presentation, we first show results for classifying the four object: \textit{Shampoo}, \textit{SprayFlask}, \textit{Wineglass} and \textit{GreenCup}. Figure \ref{fig:CM_BC_NP} shows confusion matrices for classification with BC-NP for 4- and 5-finger grasps with and without including the normals. Not using the normals requires an average of about one more sample in order to reach the lower bound probability. Similarly, Figure \ref{fig:CM_IC} shows results for classification with IC. Here, not using the normals required less iterations with the cost of a lower success rate. Complete results for 3 to 10 finger grasps with the eight objects are shown in Figures \ref{fig:num_fingers_results_wN} and \ref{fig:num_fingers_results_woN}, with and without including the normals, respectively. The figures report results for the classification success rate and the required number of samples. Table \ref{tb:mean_samples} shows the average number of iterations including corresponding standard-deviations. When not considering the normals, 2-finger grasps are included as discussed in Section \ref{sec:grasp_rep}. 
Generally, all three methods (i.e., IC, BC-NP and BC-IP) acquire high success rate with a slight advantage for IC. However, IC performs poorly when not including normals and with a low number of fingers (2-3), as seen in Figure \ref{fig:num_fingers_results_woN}. NN-based classifiers trained over grasps with a low number of fingers and no normals failed to satisfy condition \eqref{eq:P>P}. Hence, they exhibited poor results for IC. In addition, the use of two fingers produced relatively high success rate with BC, but with the high cost of too many grasp samples. Overall, using more fingers provides more information that decreases the need for additional grasp samples. This motivates the use of $z$-finger grasps to be tested in the next section. 

When comparing between BC-NP and BC-IP, the former reaches better success rate while the latter requires less samples. While the BC-IP requires less samples in cases where the prior for a certain object is high enough, the prior can also wrongly bias the classification when the probabilities for all objects are rather low. Also, the accuracy of BC-NP decreases, when considering normals and with the addition of more fingers, due to the difficulty to cope with the increase of dimensionality. In general, IC and BC perform better in queries with and without normals, respectively. Figure \ref{fig:iterations_objs4} presents the success rate for 4-finger grasps with regards to the number of grasps samples required. The results shown in the figure will be addressed in more detail in the next section. Figure \ref{fig:cm_20} demonstrates the ability to classify 20 objects from the KIT dataset (including the eight previously used). The figure presents a confusion matrix of object recognition for 5-finger grasps without including the normals and when using BC-NP. The results show that accurate object recognition can be performed for many objects without normals. 

\begin{figure}[ht]
\centering
\includegraphics[width=\linewidth]{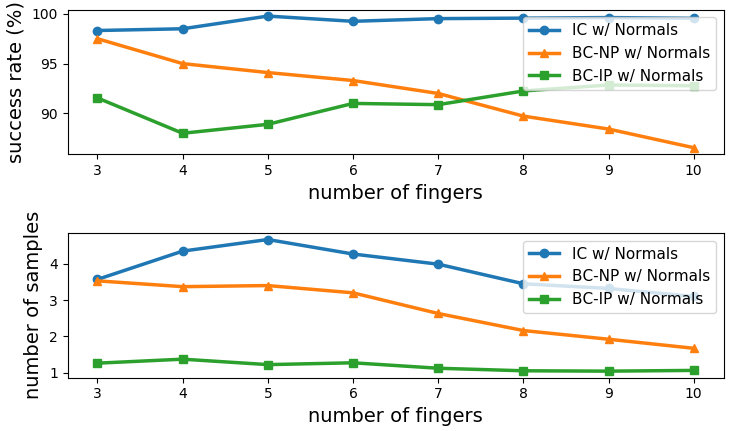}
\caption{Success rate (top) and average number of samples (bottom) with regards to the number of fingers used in the sampling while using the normals at the contacts. Plots show results for IC, BC-NP and BC-IP.}
\label{fig:num_fingers_results_wN}
\end{figure}
\begin{figure}[ht]
\centering
\includegraphics[width=\linewidth]{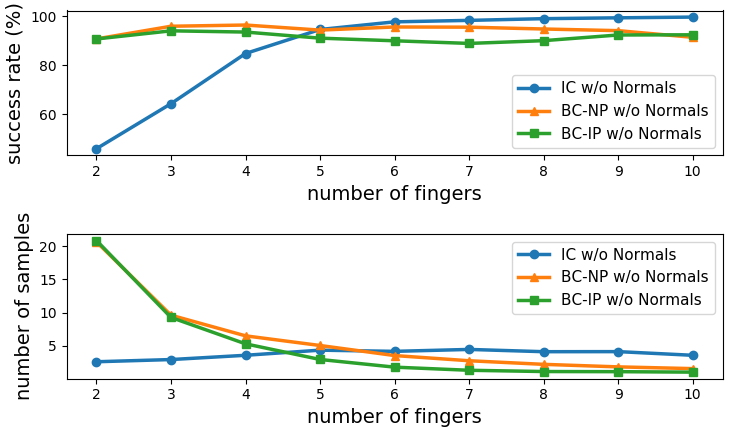}
\caption{Success rate (top) and average number of samples (bottom) with regards to the number of fingers used in the sampling while \textbf{not} using the normals at the contacts. Plots show results for IC, BC-NP and BC-IP.}
\label{fig:num_fingers_results_woN}
\end{figure}

We next observe performance of the methods with regards to size of the training data. Results for success rate and average number of iterations with regards to the number of training points can be seen in Figure \ref{fig:data_usage}. The results show that less than half of the data is required to achieve sufficient accuracy. Nevertheless, the data is collected relatively fast (a few minutes) through a computation process without the need for sampling from the physical objects. Hence, collecting excessive data is relatively cheap. In addition, the results show the benefit of improving accuracy using IC when comparing between one-shot classification accuracy (dotted blue curves) and IC (solid blue curves).


\begin{figure*}[h]
\centering
\includegraphics[width=0.9\linewidth]{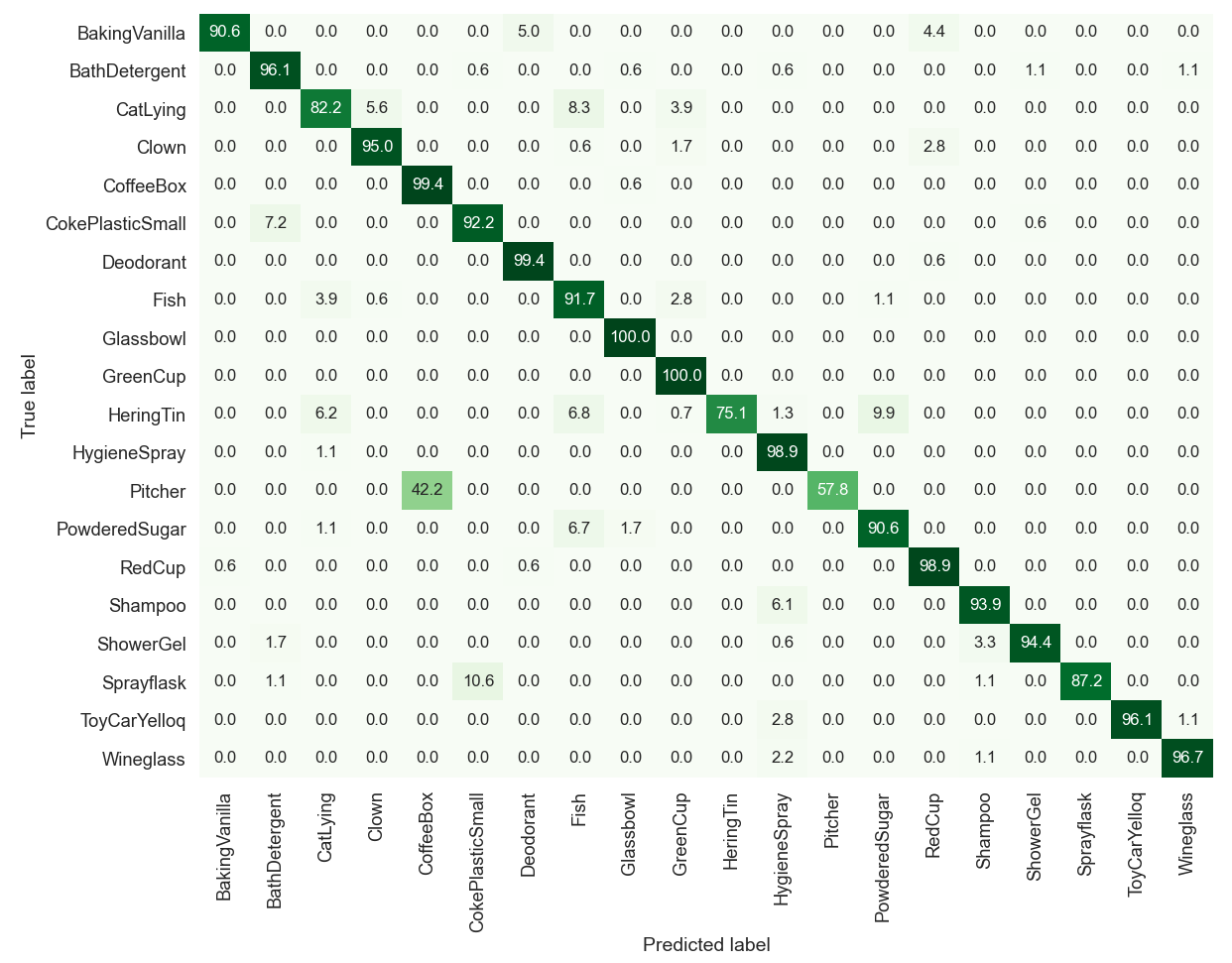}
\caption{Confusion matrix for classification of 20 objects using BC-NP with $5$-finger grasps and without normals.}
\label{fig:cm_20}
\end{figure*}

\begin{table}[]
\centering
\caption{Number (mean and std.) of grasp samples}
\label{tb:mean_samples}
\begin{tabular}{|l|cc|cc|cc|}\hline
\# fingers         & \multicolumn{2}{c|}{3} & \multicolumn{2}{c|}{4} & \multicolumn{2}{c|}{5} \\
normals              & w/   & w/o       & w/        & w/o       & w/     & w/o    \\\hline
& \multicolumn{6}{c|}{IC} \\\hline
\multicolumn{1}{|l|}{Mean}      & 3.57 & 2.96 & 4.35 & 3.60 & 4.67 & 4.37 \\
\multicolumn{1}{|l|}{Std.}         & 4.57 & 3.25 & 5.80 & 4.45 & 6.43 & 5.70 \\\hline
& \multicolumn{6}{c|}{BC-NP} \\\hline
\multicolumn{1}{|l|}{Mean}      & 3.53 & 9.62 & 3.37 & 6.52 & 3.40 & 5.07 \\
\multicolumn{1}{|l|}{Std.}         & 2.07 & 13.81 & 1.90 & 4.87 & 1.46 & 3.04 \\\hline
\end{tabular}
\end{table}

\begin{figure}
\centering
\includegraphics[width=\linewidth]{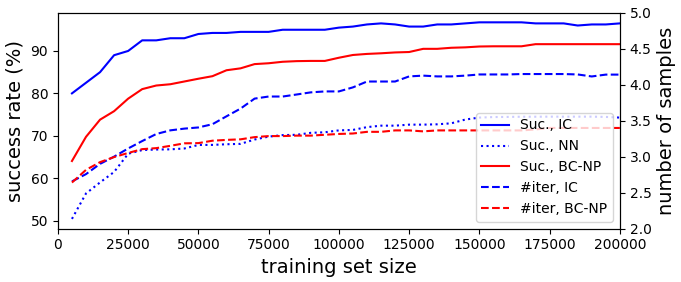}
\includegraphics[width=\linewidth]{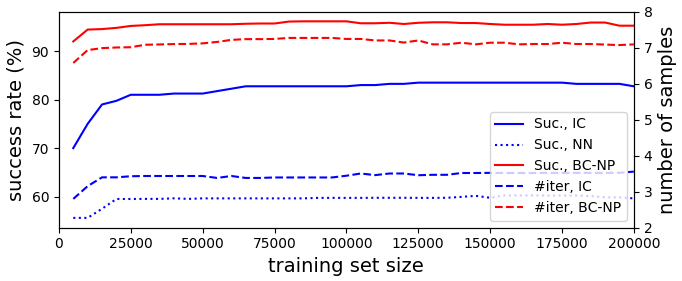}
\caption{Classification success rate (solid curves) and average number of iterations (dashed curves) with regards to the size of the training data for IC (blue) and BC-NP (red) while using (top) and not using (bottom) the normals at the contacts. Dotted curves are the success rates for one-shot classification of the NN classifiers prior to using IC.}
\label{fig:data_usage}
\end{figure}

\subsection{$z$-finger grasps}
In this Section, we evaluate performance improvement when physically sampling $n$-finger grasps and use classifiers trained over $z$-finger grasps ($z<n$), i.e., using algorithms \ref{alg:iterative_3_in_n} or \ref{alg:bayesian_classification_3_in_n}. Experiments were performed as previously discussed while using classifiers (either KDE for BC or NN for IC) trained over 3- and 4-finger grasps ($z=3$ or $z=4$, respectively). For $z=3$, we test $n=\{4,5,6,7\}$ finger grasps, having $c_{n,3}=\{4,10,20,35\}$ combinations of 3-finger grasps, respectively. Similarly, for $z=4$, we test $n=\{5,6,7\}$ finger grasps corresponding to $c_{n,4}=\{5,15,35\}$ grasp combinations, respectively. In practice, we randomly sample four combinations out of the $c_{n,z}$ as preliminary tests show that more do not provide significantly better results. Table \ref{tb:3_in_objects} presents the success rate and the average number of $n$-finger grasp samples taken when using IC and BC-NP. The table also presents the reduction in the average number of samples compared to the results for regular sampling presented in Section \ref{sec:classification_naive}. In all methods, the success rates when using the normals are similar to previous ones (Figure \ref{fig:num_fingers_results_wN}) while the average number of samples is significantly reduced by 40\%-70\%. However, when not using the normals and for IC,  reduction exists while the success rates are low. This is since the original NN classifiers for $n=3$ and $n=4$ performed poorly as seen in Table \ref{tb:classifiers} and Figure \ref{fig:num_fingers_results_woN}. Nevertheless, BC-NP yielded better success rates with some improvements in the required average number of samples in most cases. Conclusively, IC and BC with $z$-finger grasps provide best results for object recognition with and without normals, respectively.

Figure \ref{fig:iterations_objs4} presents classification success rates with regards to the number of grasp samples and for all the above methods. The results were generated while disregarding the bounds $\lambda_s$ and $\lambda_p$. Thus, we measured the success rate when a certain number of grasp samples is taken. Almost all methods are able to reach 100\% success rate with the increase of grasp samples. However, some require an excessive amount of samples which is not practical. Nevertheless, a small number of samples is required to achieve relatively high success rate. Evidently, the use of a $z$-finger model reduces the number of samples significantly when using the normals and slightly when not using normals. We note that IC with $z=3$ without normals converged to some success rate lower than 100\% since its NN classifier does not satisfy condition \eqref{eq:P>P} for all objects.

As described in Section \ref{sec:z-grasps}, $z$-finger grasps can also be used to exploit incomplete grasps when not all fingers are in contact. We test the performance of such case for $4$- and $5$-finger robotic hands. For each grasp instant, we assume that the number of fingers that may be in contact are given by the discrete probability distribution $p_n(z)=Pr(z=a)$. For $n=4$ and where $a=\{3,4\}$, the probability distribution is defined as 
\begin{equation}
    \label{eq:p4(z)}
    p_4(z)=\left\{
    \begin{array}{lll}
      Pr(z=3) & = & 0.4 \\
      Pr(z=4) & = & 0.6 \\
\end{array} \right..
\end{equation}
Similarly, when $n=5$ and $a=\{3,4,5\}$, we define the probability distribution as
\begin{equation}
    \label{eq:p5(z)}
    p_5(z)=\left\{
    \begin{array}{lll}
      Pr(z=3) & = & 0.2 \\
      Pr(z=4) & = & 0.3 \\
      Pr(z=5) & = & 0.5 \\
\end{array} \right..
\end{equation}
Grasps of less than three fingers are not taken in account and are re-sampled. Table \ref{tb:p(z)} presents classification results with the above probability distributions for IC and BC-NP. The results indicate that accuracy is yet maintained even if incomplete grasps occur frequently.

\subsection{Object scalability}

To test the ability to classify objects of different scale, we train the classifiers as described in Section \ref{sec:classification_naive} while normalizing the grasp parameterization vectors. As described in Section \ref{sec:scale}, two similar grasps with a scaling factor reference will have the same parameterization vector if the contact points of each grasp are scaled by the square root of the total surface area ($A_v$). Hence, we build a training set for the eight KIT objects comprising of labeled and normalized parameterization vectors of sampled grasps. Evaluation was then performed for each object over 1,000 trials. In each trial, the object was randomly scaled by $s\in[0.1, 5]$ followed by grasp sampling and prediction using IC and BC. Table \ref{tb:scaled_objects} summarizes the success rates for 3-, 4- and 5-finger grasps. The success rates are slightly lower compared to non-scaled objects. IC still performs poorly without normals and when using a low number of fingers. It is important to note that scaling down objects may hinder the ability to acquire valid normal vectors since the robot hand does not also scale along. Nevertheless, the results show that the classification can be scale invariant and generalized to objects of different size even without normal information.

\begin{table*}[h!]
\small
\parbox{.6\linewidth}{
\centering
\caption{Success rate and sampling improvement when using $z$-finger grasps}
\label{tb:3_in_objects}
\begin{tabular}{|l|p{0.45cm}p{0.45cm}|p{0.45cm}p{0.45cm}|p{0.45cm}p{0.45cm}|p{0.45cm}p{0.45cm}|}\hline
$n$         & \multicolumn{2}{c|}{4} & \multicolumn{2}{c|}{5} & \multicolumn{2}{c|}{6} & \multicolumn{2}{c|}{7} \\
Normals              & w/   & w/o       & w/        & w/o       & w/     & w/o  & w/     & w/o \\\hline
IC & \multicolumn{8}{c|}{IC, $z=3$} \\\hline
\multicolumn{1}{|l|}{Success rate (\%)} & 95.7 & 62 & 96.7 & 62.8 & 96.1 & 64 & 96.8 & 62.3\\
\multicolumn{1}{|l|}{Num. samples} & 1.6 & 1.26 & 1.57 & 1.25 & 1.57 & 1.26 & 1.49 & 1.28\\
\multicolumn{1}{|l|}{Samp. redu. (\%)} & 63.2 & 65 & 66.3 & 71.4 & 60.4 & 69.9 & 62.6 & 71.4\\
\hline
IC & \multicolumn{8}{c|}{IC, $z=4$}  \\\hline
\multicolumn{1}{|l|}{Success rate (\%)} & - & - & 93.5 & 72.0 & 95.0 & 76.2 & 97.0 & 76.7 \\
\multicolumn{1}{|l|}{Num. samples} & - & - & 1.75 & 1.58 & 2.2 & 1.26 & 2.2 & 1.43 \\
\multicolumn{1}{|l|}{Samp. redu. (\%)} & - & - & 62.5 & 63.8 & 44.1 & 69.9 & 45.3 & 68.1  \\
\hline
BC-NP & \multicolumn{8}{c|}{BC-NP. $z=3$}  \\\hline
\multicolumn{1}{|l|}{Success rate (\%)} & 92.5 & 91.9 & 91.7 & 93.1 & 93.4 & 96.2 & 94.5 & 96.1  \\
\multicolumn{1}{|l|}{Num. samples} & 1.33 & 4.2 & 1.27 & 4.65 & 1.32 & 4.8 & 1.3 & 5.2  \\
\multicolumn{1}{|l|}{Samp. redu. (\%)} & 60.8 & 35.7 & 60.3 & 8.3 & 49.8 & -35 & 40.2 & -86  \\
\hline
BC-NP & \multicolumn{8}{c|}{BC-NP. $z=4$}  \\\hline
\multicolumn{1}{|l|}{Success rate (\%)} & - & - & 96.0 & 89.0 & 95.5 & 91.0 & 97.2 & 94.0 \\
\multicolumn{1}{|l|}{Num. samples}      & - & - & 1.36 & 2.48 & 1.29 & 2.53 & 1.31 & 2.15 \\
\multicolumn{1}{|l|}{Samp. redu. (\%)}  & - & - & 60 & 51.1 & 59.6 & 28.9 & 50.2 & 22.6 \\
\hline
\end{tabular}
}
\hfill
\parbox{.4\linewidth}{
\centering
\caption{Success rate and number of samples for incomplete grasps}
\label{tb:p(z)}
\begin{tabular}{|l|p{0.45cm}p{0.45cm}|p{0.45cm}p{0.45cm}|}\hline
$n$, $p_n(z)$         & \multicolumn{2}{c|}{4, $p_4(z)$} & \multicolumn{2}{c|}{5, $p_5(z)$} \\
Normals              & w/   & w/o       & w/        & w/o  \\\hline
 & \multicolumn{4}{c|}{IC} \\\hline
\multicolumn{1}{|l|}{Success rate (\%)} & 98.7 & 75.8 & 98.8 & 87.1 \\
\multicolumn{1}{|l|}{Num. samples} & 4.99 & 3.47 & 4.19 & 3.56 \\
\hline
 & \multicolumn{4}{c|}{BC-NP} \\\hline
\multicolumn{1}{|l|}{Success rate (\%)} & 97.3 & 95.3 & 93.3 & 96.0 \\
\multicolumn{1}{|l|}{Num. samples} & 3.27 & 7.72 & 2.87 & 6.3 \\
\hline
\end{tabular}
}
\end{table*}

\begin{figure}
\centering
\includegraphics[width=\linewidth]{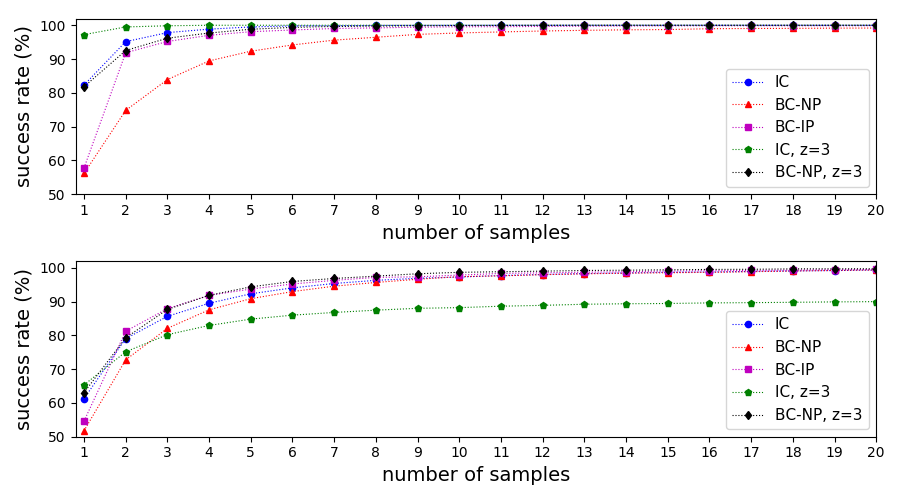}
\caption{Classification success rate of the eight objects for 4-finger grasps (top) with and (bottom) without using the normals at the contacts.}
\label{fig:iterations_objs4}
\end{figure}

\subsection{Geometry recognition}
\label{sec:geometry_recognition}

In this section, we evaluate the ability of the approach to classify everyday objects into types of geometries and not only specific objects. First, we train a classifier using three selected primitive geometries: a box, a sphere and a cylinder. To generate a training set for the classifier, three spatial CAD models were formed in an arbitrary size and positioned aligned to the primary axes as seen in Figure \ref{fig:bcs}. Their general size is not of an importance as further discussed. Then, for each object, 1,000 variations were created by non-uniform resizing along the primary axes with a random factor of $s_k\in[0.5,1.5]$, where $k=x,y,z$. Thus, we acquire three sets associated with the primitive geometries. Due to the non-uniform resizing of the sphere and cylinder, their corresponding sets include ellipsoids and elliptic cylinders of various sizes, respectively. Since the primitive objects and future query ones have arbitrary sizes, grasp samples are required to be normalized in order to compare similar geometries with different sizes. Therefore, each sampled grasp is scaled by the surface area of its grasp polyhedron as described in Section \ref{sec:scale}. According to Theorem \ref{thm:equal_phi}, such scaling will enable invariant grasp representation and allow to match between grasps of objects with different sizes. Training data was acquired by sampling, normalizing, parameterizing and labeling random grasps within the three sets. 

\begin{figure}
\centering
\begin{tabular}{ccc}
\includegraphics[width=0.2\linewidth]{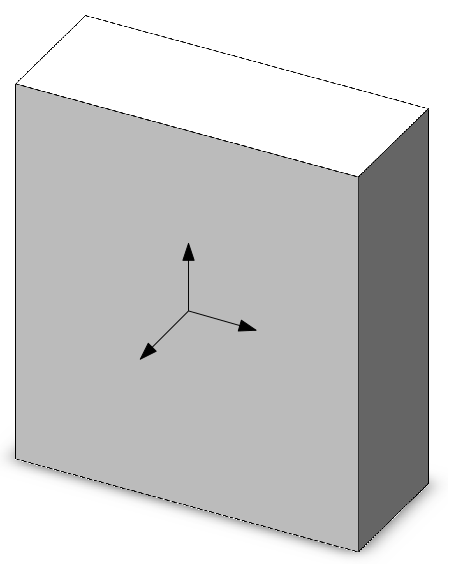} \text{~~~~} & \includegraphics[width=0.1\linewidth]{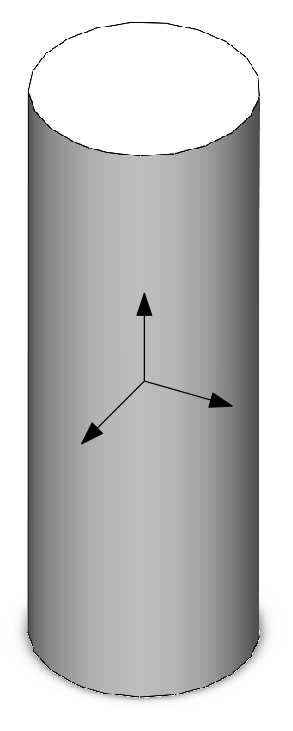} & \text{~~~~}
\includegraphics[width=0.2\linewidth]{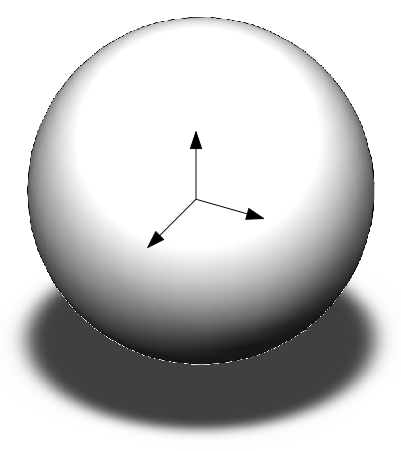} \\
(a) \text{~~~~} & (b) & \text{~~~~} (c) \\
\end{tabular}
\caption{Three primitive geometries used to train a classifier to classify types of objects: (a) box, (b) cylinder and (c) sphere.}
\label{fig:bcs}
\end{figure}

\begin{table}[]
\centering
\caption{Success rate (\%) for classifying scaled objects}
\label{tb:scaled_objects}
\begin{tabular}{|l|cc|cc|cc|}\hline
$n$        & \multicolumn{2}{c|}{3} & \multicolumn{2}{c|}{4} & \multicolumn{2}{c|}{5} \\
Normals              & w/   & w/o       & w/        & w/o       & w/     & w/o    \\\hline
\multicolumn{1}{|l|}{IC}    & 96.2 & 43.3 & 90.2 & 63.8 & 91.1 & 73.5 \\
\multicolumn{1}{|l|}{BC-NP} & 91.5 & 84.0 & 92.2 & 91.3 & 85.2 & 90.9 \\
\multicolumn{1}{|l|}{BC-IP} & 93.25 & 86.8 & 91.0 & 90.5 & 90.2 & 91.8 \\
\hline
\end{tabular}
\end{table}

\begin{figure}
\centering
\includegraphics[width=\linewidth]{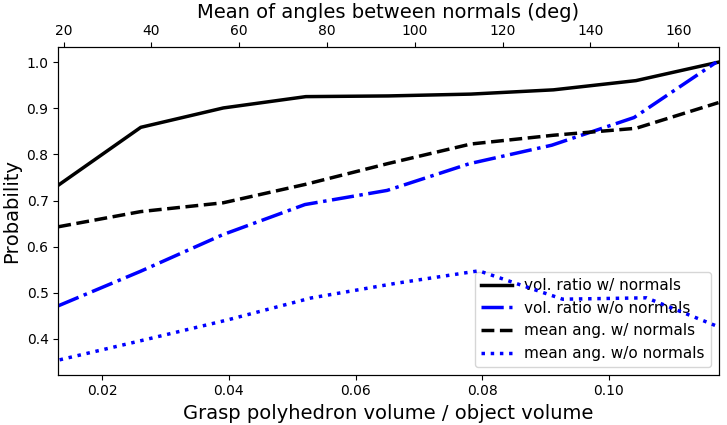}
\caption{The average probability of a NN classification with regards to the volume ratio between the grasp polyhedron and the object, and the mean angles of between the normals at the contacts. The mean of the angles is computed using the mean of circular quantities.}
\label{fig:grasp_quality}
\end{figure}

We now explore the performance of IC and BC-NP, trained with the primitive data, in classifying the geometry of 33 everyday objects taken from the Graspnet dataset \citep{fang2020}. Similar to the experiments in Section \ref{sec:classification_naive} and according to Algorithms \ref{alg:iterative}-\ref{alg:bayesian_classification}, a set of 4-finger grasp samples are taken from each object until the lower bound certainty ($\lambda_s=\lambda_p=0.85$) is achieved. We note that grasp samples taken from each query object are also normalized according to Theorem \ref{thm:equal_phi}. Table \ref{tb:graspnet} presents, for each object, classification rate into each of the three primitives out of 10 trials. The results do not consider the normals at the contacts to demonstrate classification with minimal information. Experiments with normals have not shown significant advantage in this case. The results show that most of the objects are classified as expected with high success rate. IC is slightly more challenged in identifying boxes than BC-NP. Some objects, such as \textit{dabao\_wash\_soup}, \textit{soap} and \textit{dove}, have flat faces along with large rounded corners that can be classified differently based on the samples taken. Hence, their classification rates are distributed between the different primitives. Nevertheless, our proposed approach have been successfully demonstrated to  distinguish between different geometries of everyday objects.

\begin{table*}[]
\centering
\small
\caption{Results for classifying objects from the Graspnet \citep{fang2020} dataset into three types of geometries using 4-finger grasps and without considering normals at the contacts}
\label{tb:graspnet}
\begin{tabular}{|ccc|ccc|ccc|}
\hline
\multirow{2}{*}{id}  & \multirow{2}{*}{Image} & \multirow{2}{*}{Name} & \multicolumn{3}{c|}{IC (rate in \%)} & \multicolumn{3}{c|}{BC-NP (rate in \%)}  \\ 
  &  &  & ~~~~~box~~~~~ & el. cylinder & ellipsoid & ~~~~~box~~~~~ & el. cylinder & ellipsoid   \\ \hline

\rule{0pt}{15pt} 000 & \parbox[c]{25pt}{\includegraphics[height=4ex]{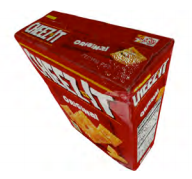}}   & 003\_cracker\_box & \cellcolor[HTML]{9B9B9B}50 & 0 & \cellcolor[HTML]{9B9B9B}50  & \cellcolor[HTML]{9B9B9B}90  & 10 & 0  \\ 

\rule{0pt}{16pt} 001 & \parbox[c]{20pt}{\includegraphics[height=4ex]{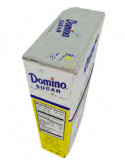}}   & 004\_sugar\_box & \cellcolor[HTML]{9B9B9B}60   & 10 & 30    & \cellcolor[HTML]{9B9B9B}80    & 20 & 0  \\ 


\rule{0pt}{15pt} 003 & \parbox[c]{20pt}{\includegraphics[height=4ex]{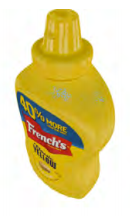}}   & 006\_mustard\_bottle & 30   & \cellcolor[HTML]{9B9B9B}50 & 20 & 40       & \cellcolor[HTML]{9B9B9B}60        & 0        \\ 


\rule{0pt}{15pt} 005 & \parbox[c]{20pt}{\includegraphics[height=4ex]{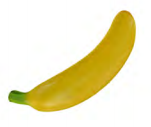}}   & 011\_banana & 0 & \cellcolor[HTML]{9B9B9B}100 & 0 & 0        & \cellcolor[HTML]{9B9B9B}100 & 0        \\ 


\rule{0pt}{15pt} 010 & \parbox[c]{10pt}{\includegraphics[height=4ex]{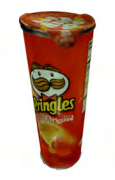}}   & 001\_chips\_can & 10    & \cellcolor[HTML]{9B9B9B}90 & 10 &   0       &  \cellcolor[HTML]{9B9B9B}100 & 0     \\ 

\rule{0pt}{15pt} 011 & \parbox[c]{20pt}{\includegraphics[height=3ex]{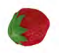}}   & 012\_strawberry & 0 & 0 & \cellcolor[HTML]{9B9B9B}100 & 0 & 0 & \cellcolor[HTML]{9B9B9B}100   \\ 

\rule{0pt}{15pt} 012 & \parbox[c]{20pt}{\includegraphics[height=4ex]{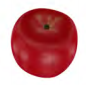}}   & 013\_apple & 10 & 10 & \cellcolor[HTML]{9B9B9B}80 & 20 & 0 & \cellcolor[HTML]{9B9B9B}80   \\ 

\rule{0pt}{15pt} 013 & \parbox[c]{20pt}{\includegraphics[height=4ex]{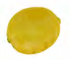}}   & 014\_lemon & 0 & 0 & \cellcolor[HTML]{9B9B9B}100 & 0 & 0 & \cellcolor[HTML]{9B9B9B}100   \\ 

\rule{0pt}{15pt} 014 & \parbox[c]{20pt}{\includegraphics[height=4ex]{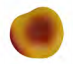}}   & 015\_peach & 10 & 0 & \cellcolor[HTML]{9B9B9B}90 & 0 & 0 & \cellcolor[HTML]{9B9B9B}100   \\ 

\rule{0pt}{15pt} 015 & \parbox[c]{10pt}{\includegraphics[height=4ex]{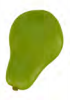}}   & 016\_pear & 10 & 0 & \cellcolor[HTML]{9B9B9B}90 & 0 & 10 & \cellcolor[HTML]{9B9B9B}90   \\ 

\rule{0pt}{15pt} 016 & \parbox[c]{20pt}{\includegraphics[height=4ex]{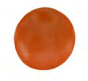}}   & 017\_orange & 10 & 0 & \cellcolor[HTML]{9B9B9B}90 & 0 & 0 & \cellcolor[HTML]{9B9B9B}100   \\ 

\rule{0pt}{15pt} 017 & \parbox[c]{20pt}{\includegraphics[height=4ex]{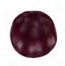}}   & 018\_plum & 0 & 0 & \cellcolor[HTML]{9B9B9B}100 & 0 & 0 & \cellcolor[HTML]{9B9B9B}100   \\ 

\rule{0pt}{15pt} 019 & \parbox[c]{10pt}{\includegraphics[height=4ex]{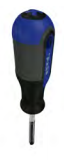}}   & 043\_phillips\_screwdriver & 0 & \cellcolor[HTML]{9B9B9B}100 & 0 & 0 & \cellcolor[HTML]{9B9B9B}100 & 0   \\ 

\rule{0pt}{15pt} 020 & \parbox[c]{10pt}{\includegraphics[height=4ex]{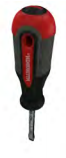}}   & 043\_flat\_screwdriver & 0 & \cellcolor[HTML]{9B9B9B}100 & 0 & 0 & \cellcolor[HTML]{9B9B9B}100 & 0   \\ 

\rule{0pt}{15pt} 021 & \parbox[c]{20pt}{\includegraphics[height=3ex]{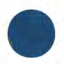}}   & 057\_racquetball & 0 & 0 & \cellcolor[HTML]{9B9B9B}100 & 0 & 0 & \cellcolor[HTML]{9B9B9B}100   \\ 

\rule{0pt}{15pt} 024 & \parbox[c]{20pt}{\includegraphics[height=3ex]{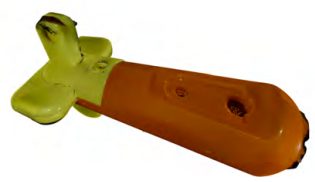}}   & 072\_a\_toy\_airplane & 20 & \cellcolor[HTML]{9B9B9B}70 & 10 & 0 & \cellcolor[HTML]{9B9B9B}100 & 0   \\ 

\rule{0pt}{15pt} 035 & \parbox[c]{20pt}{\includegraphics[height=4ex]{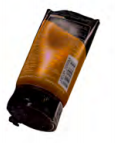}}   & jvr\_cleansing\_foam & 0 & \cellcolor[HTML]{9B9B9B}100 & 0 & 10 & \cellcolor[HTML]{9B9B9B}90 & 0   \\ 

\rule{0pt}{15pt} 036 & \parbox[c]{20pt}{\includegraphics[height=4ex]{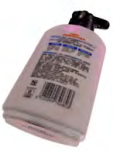}}   & dabao\_wash\_soup & \cellcolor[HTML]{9B9B9B}50 & 30 & 20 & 0 & \cellcolor[HTML]{9B9B9B}100 & 0   \\ 

\rule{0pt}{15pt} 037 & \parbox[c]{20pt}{\includegraphics[height=4ex]{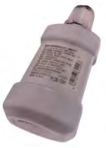}}   & nzskincare\_mouth\_rinse & 10 & \cellcolor[HTML]{9B9B9B}70 & 20 & 0 & \cellcolor[HTML]{9B9B9B}100 & 0   \\ 

\rule{0pt}{15pt} 038 & \parbox[c]{20pt}{\includegraphics[height=4ex]{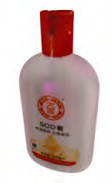}}   & dabao\_sod & 10 & \cellcolor[HTML]{9B9B9B}80 & 10 & \cellcolor[HTML]{9B9B9B}60 & 40 & 0   \\ 

\rule{0pt}{15pt} 039 & \parbox[c]{20pt}{\includegraphics[height=4ex]{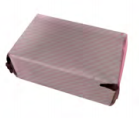}}   & soap\_box & 20 & 10 & \cellcolor[HTML]{9B9B9B}70 & \cellcolor[HTML]{9B9B9B}80 & 0 & 20   \\ 

\rule{0pt}{15pt} 040 & \parbox[c]{20pt}{\includegraphics[height=4ex]{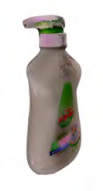}}   & kispa\_cleanser & 0 & \cellcolor[HTML]{9B9B9B}70 & 30 & 0 & \cellcolor[HTML]{9B9B9B}100 & 0   \\ 

\rule{0pt}{15pt} 041 & \parbox[c]{20pt}{\includegraphics[height=4ex]{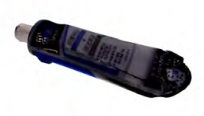}}   & darlie\_toothpaste & 0 & \cellcolor[HTML]{9B9B9B}90 & 10 & 0 & \cellcolor[HTML]{9B9B9B}100 & 0   \\ 


\rule{0pt}{15pt} 043 & \parbox[c]{20pt}{\includegraphics[height=4ex]{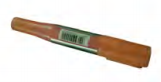}}   & baoke\_marker & 0 & \cellcolor[HTML]{9B9B9B}100 & 0 & 0 & \cellcolor[HTML]{9B9B9B}100 & 0   \\ 

\rule{0pt}{15pt} 044 & \parbox[c]{20pt}{\includegraphics[height=4ex]{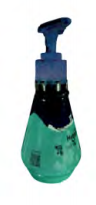}}   & hosjam & 10 & \cellcolor[HTML]{9B9B9B}80 & 10 & 0 & \cellcolor[HTML]{9B9B9B}100 & 0   \\ 

\rule{0pt}{15pt} 058 & \parbox[c]{20pt}{\includegraphics[height=4ex]{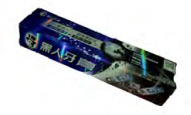}}   & darlie\_box & 0 & \cellcolor[HTML]{9B9B9B}100 & 0 & 0 & \cellcolor[HTML]{9B9B9B}100 & 0   \\ 

\rule{0pt}{15pt} 059 & \parbox[c]{20pt}{\includegraphics[height=4ex]{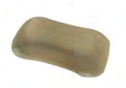}}   & soap & 10 & 40 & \cellcolor[HTML]{9B9B9B}50 & \cellcolor[HTML]{9B9B9B}50 & 20 & 30   \\ 

\rule{0pt}{15pt} 061 & \parbox[c]{10pt}{\includegraphics[height=4ex]{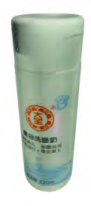}}   & dabao\_facewash & 0 & \cellcolor[HTML]{9B9B9B}80 & 20 & 0 & \cellcolor[HTML]{9B9B9B}100 & 0   \\ 

\rule{0pt}{15pt} 062 & \parbox[c]{10pt}{\includegraphics[height=4ex]{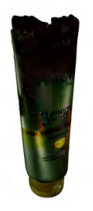}}   & pantene & 20 & \cellcolor[HTML]{9B9B9B}70 & 10 & 0 & \cellcolor[HTML]{9B9B9B}100 & 0   \\ 

\rule{0pt}{15pt} 063 & \parbox[c]{10pt}{\includegraphics[height=4ex]{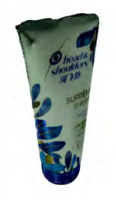}}   & head\_shoulders\_supreme & 20 & \cellcolor[HTML]{9B9B9B}80 & 0 & 0 & \cellcolor[HTML]{9B9B9B}100 & 0   \\ 

\rule{0pt}{15pt} 064 & \parbox[c]{10pt}{\includegraphics[height=4ex]{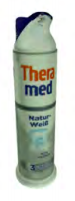}}   & thera\_med & 10 & \cellcolor[HTML]{9B9B9B}80 & 10 & 0 & \cellcolor[HTML]{9B9B9B}100 & 0   \\ 

\rule{0pt}{15pt} 065 & \parbox[c]{10pt}{\includegraphics[height=4ex]{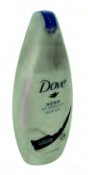}}   & dove & 20 & \cellcolor[HTML]{9B9B9B}40 & \cellcolor[HTML]{9B9B9B}40 & 0 & \cellcolor[HTML]{9B9B9B}100 & 0   \\ 

\rule{0pt}{15pt} 066 & \parbox[c]{10pt}{\includegraphics[height=4ex]{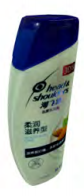}}   & head\_shoulders\_care & 0 & \cellcolor[HTML]{9B9B9B}90 & 10 & 0 & \cellcolor[HTML]{9B9B9B}100 & 0   \\ 


\hline
\end{tabular}
\end{table*}

\subsection{Grasp quality}
We characterize the quality of a grasp in terms of its ability to provide information about a query object. Hence, we analyse the classification certainty that a grasp provides with regards to the properties of its parameterization. We consider two measures. First, we measure the ratio between the volume of the polyhedron formed by the contact points and the volume of the object. Second, we measure the mean angles between all normals at the contact points computed by the mean of circular quantities. The volume and mean angles measure the distribution of the contacts and the variation of normal directions, respectively. 

Figure \ref{fig:grasp_quality} presents results for NN classification certainty with regards to these measures for 4-finger grasps with and without normals. The certainty is expressed in the probability distribution assigned to the query object by the classifier. It is clear that increasing the grasp polyhedron's volume yields better prediction certainty. A higher volume means better sampling of shape and size and a more informative classifier input. Similarly, when considering normals, a high variation in the normal directions implies about the object shape. When not considering normals in the classifier, they do not provide additional information as expected and, therefore, the classification probability is low. These insights are similar to human behaviour where tactile recognition of an object is usually performed with a caging formation of the hand. On the other hand, positioning all of the fingers on one facet of a box, for example, provides deficient information.



\section{Gripper Experiments}
\label{sec:experiments}

\begin{figure}
\centering
\includegraphics[width=\linewidth]{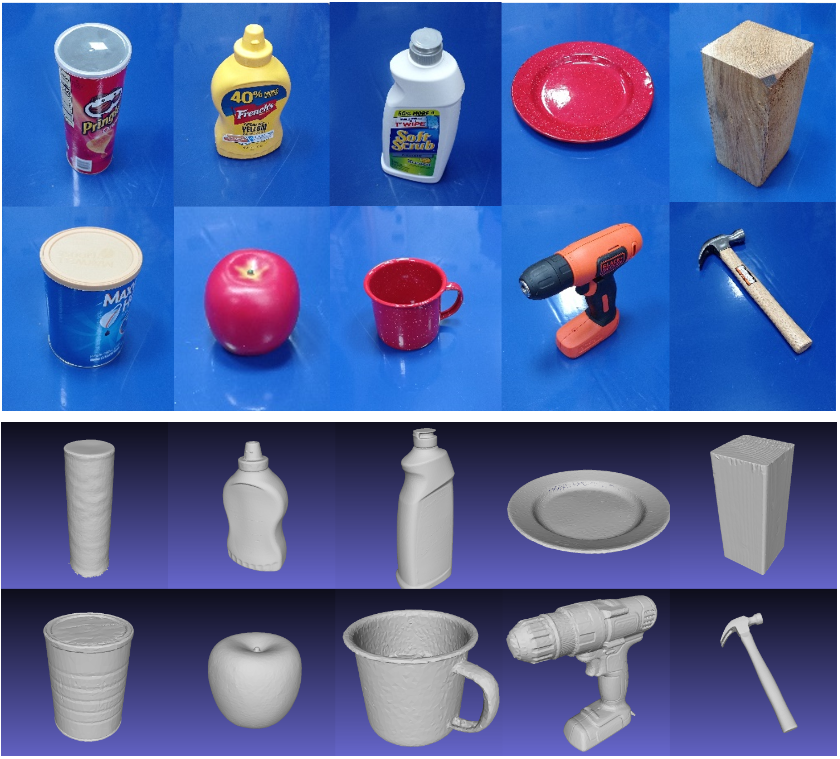}\\
\caption{Ten YCB objects used in the grasping experiments (top) and their meshes (bottom) used for training the classifiers. From top left: chips can (01), mustard bottle (06), bleach cleanser (21), plate (29), wood block (36), master chef can (02), apple (13), mug (25), power drill (35) and hammer (48) \citep{Calli2017}.}
\label{fig:ycb}
\end{figure}

In this section, we conduct experiments while considering real grasps with robotic hands. In all experiments, we evaluate recognition capabilities with the Yale-CMU-Berkeley (YCB)  dataset \citep{Calli2017}. The YCB dataset includes models of various objects specifically designed for benchmarking in grasping and manipulation research. We use 10 objects seen in Figure \ref{fig:ycb}: chips can (YCB object number 01), master chef can (02), mustard bottle (06), apple (13), bleach cleanser (21), mug (25), plate (29), power drill (35), wood block (36) and hammer (48).

We experiment with a simulated 5-finger Pisa/IIT SoftHand and a real 4-finger Allegro hand. The Pisa/IIT SoftHand \citep{Catalano2014} was simulated in ROS-Gazebo environment. The Pisa/IIT hand is an anthropomorphic hand with 19 joints and adaptive synergy capabilities. Hence, it uses only one actuator and a system of ligaments and tendons to enable adaptive grasping. In the simulation, seen in Figure \ref{fig:pisa}, we assume that finger joint angles can be measured and the kinematics are known. That is, the locations of the finger tips can be computed at any time. Joint torques and, therefore, normal directions cannot be measured. In real experiments, we used the 4-finger Allegro robotic hand. The Allegro is a fully-actuated hand comprised of 16 actuators, four in each finger. The hand is controlled through ROS which also provides data stream of joint angles and torques. Here also, the locations of the finger tips are computed using the kinematics \eqref{eq:hand_kinematics} with extracted joint angles. In addition, we compute the contacts force directions using angles and torques according to \eqref{eq:contact_forces}. Grasps of various objects with the Allegro hand during experiments can be seen in Figures \ref{fig:allegro_grasp} and \ref{fig:allegro_ycb}. Figures \ref{fig:mustard_allegro}-\ref{fig:drill_allegro} (see also Extension 1) show snapshots of grasp iterations for four YCB objects and the certainty about the object increasing when using BC-NP ($z=3$).

Experiments with both hands are performed after training classifiers using only the CAD models of the 10 objects as described in Section \ref{sec:obj_classification}. Here also, the classifiers are independent of the robotic hands to be used. In both the simulated and real hands, we acquire the position of the finger tips through forward kinematics. However, it is challenging to exactly measure the locations of the contact points. This tends to confuse a classifier trained over exact positions on the object. Hence, we add Gaussian noise to the contact locations of the CAD models during the generation of training data. Consequently, the classifier becomes more robust and immune to inaccuracies in contact locations. 

Classification results for 10 queries per object can be seen in Table \ref{tb:experiments}. In each each query, grasp samples are taken randomly by arbitrary choosing approach angles of the hand to grasp the object. When using a robotic hand, the distribution of contact points is limited based on its kinematics. Hence, the number of iterations required for sufficient certainty is higher. The success rates for BC-NP are the highest while requiring relatively many grasp samples. Nevertheless, when using $z$-finger grasps of $z=3$, the number of iterations is reduced with the cost of a slightly lower accuracy. Here also, IC with $z=3$ result in inaccurate classification. The use of normals at the contact points acquired from the Allegro hand did not improve accuracy much due to noisy torque measurements. Nevertheless, such addition of data enabled the reduction of the number of iterations. While five or six grasps can be seen as too much, we note that grasps do not have to be significantly different. In our experiments, sliding of a finger from a former grasp configuration to a near region was also considered as a new grasp sample. Examples for this can be seen in Figures \ref{fig:mustard_allegro}-\ref{fig:drill_allegro}. Thus, sampling can be fast and efficient, very similar to human tactile motion.

\begin{figure}
\centering
\includegraphics[width=\linewidth]{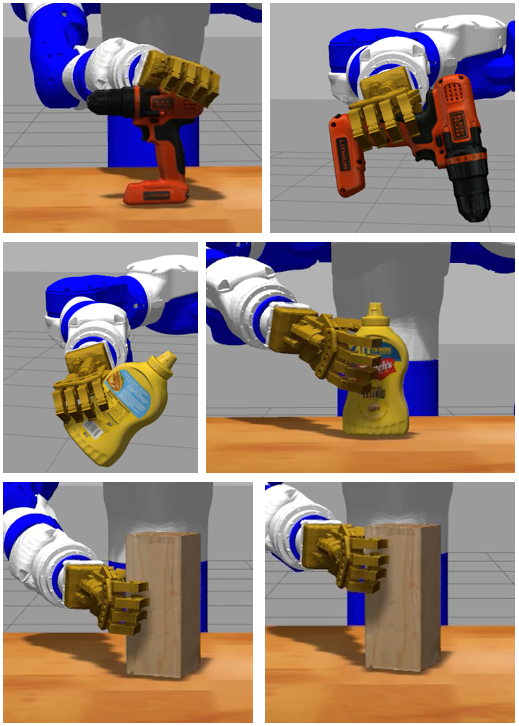}
\caption{Three YCB objects grasped by the 5-finger Pisa/IIT soft hand in various grasps: (top) power drill, (middle) Mustard bottle and (bottom) wood block.}
\label{fig:pisa}
\end{figure}
\begin{figure}
\centering
\includegraphics[width=\linewidth]{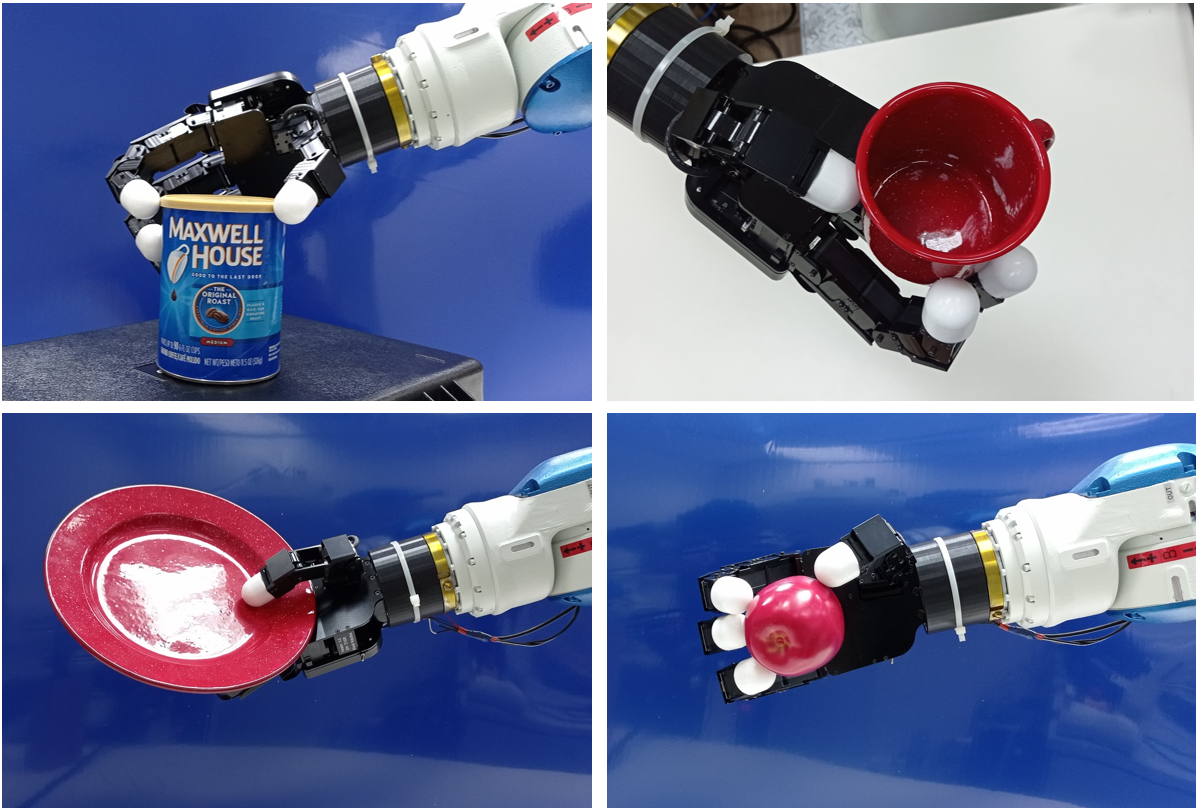}
\caption{Object recognition experiments with the 4-finger Allegro hand. The images show grasp instances of four YCB objects.}
\label{fig:allegro_ycb}
\end{figure}

\begin{figure*}
\centering
\includegraphics[width=\linewidth]{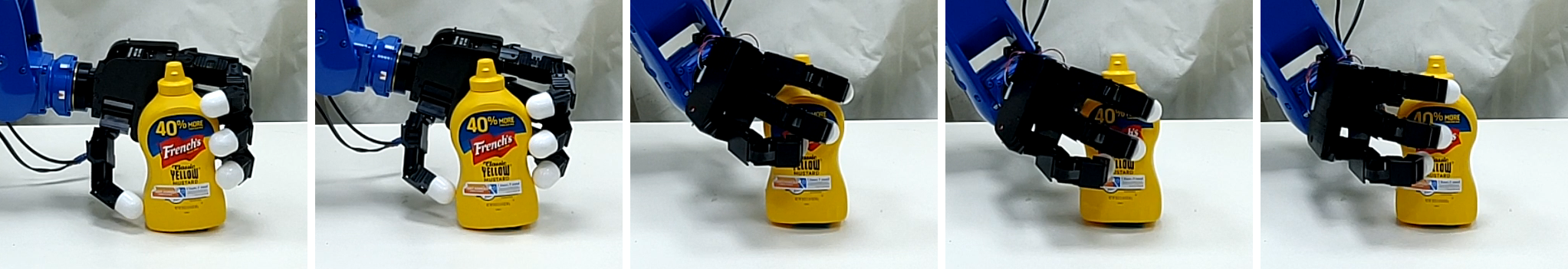}
\caption{Five iterative grasps of the Mustard Bottle object in an experiment with the 4-finger Allegro hand. The recognition certainties about the object using BC-NP ($z=3$) without normals are, from left to right, $P_1(\ssl{O}_{\text{mustard}}|\ve{q}_1)=0.36$, $P_2(\ssl{O}_{\text{mustard}}|\ve{q}_1, \ve{q}_2)=0.51$, $P_2(\ssl{O}_{\text{mustard}}|\ve{q}_1, \ve{q}_2, \ve{q}_3)=0.66$, $P_4(\ssl{O}_{\text{mustard}}|\ve{q}_1, \ve{q}_2, \ve{q}_3,\ve{q}_4)=0.84$ and $P_5(\ssl{O}_{\text{mustard}}|\ve{q}_1, \ve{q}_2, \ve{q}_3,\ve{q}_4,\ve{q}_5)=0.90$.}
\label{fig:mustard_allegro}
\end{figure*}
\begin{figure*}
\centering
\includegraphics[width=\linewidth]{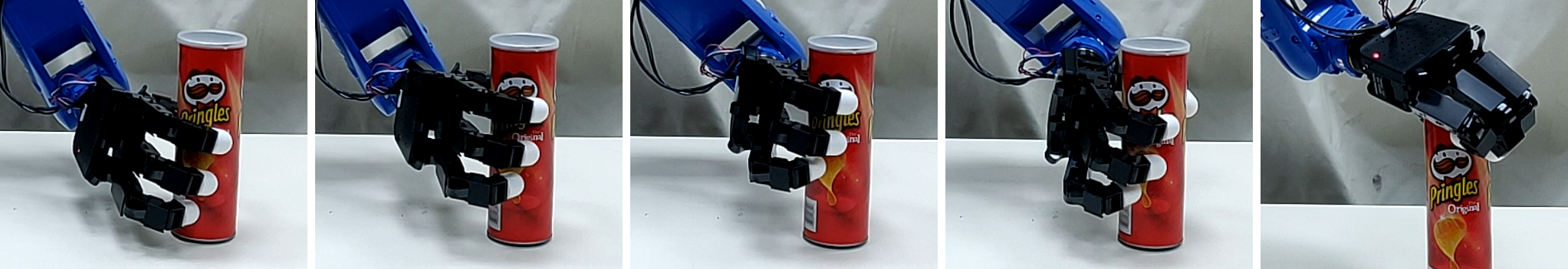}
\caption{Five iterative grasps of the Chips Can object in an experiment with the 4-finger Allegro hand. The recognition certainties about the object using BC-NP ($z=3$) without normals are, from left to right, $P_1(\ssl{O}_{\text{chips}}|\ve{q}_1)=0.23$, $P_2(\ssl{O}_{\text{chips}}|\ve{q}_1, \ve{q}_2)=0.39$, $P_2(\ssl{O}_{\text{chips}}|\ve{q}_1, \ve{q}_2, \ve{q}_3)=0.63$, $P_4(\ssl{O}_{\text{chips}}|\ve{q}_1, \ve{q}_2, \ve{q}_3,\ve{q}_4)=0.80$ and $P_5(\ssl{O}_{\text{chips}}|\ve{q}_1, \ve{q}_2, \ve{q}_3,\ve{q}_4,\ve{q}_5)=0.86$.}
\label{fig:chipscan_allegro}
\end{figure*}
\begin{figure*}
\centering
\includegraphics[width=\linewidth]{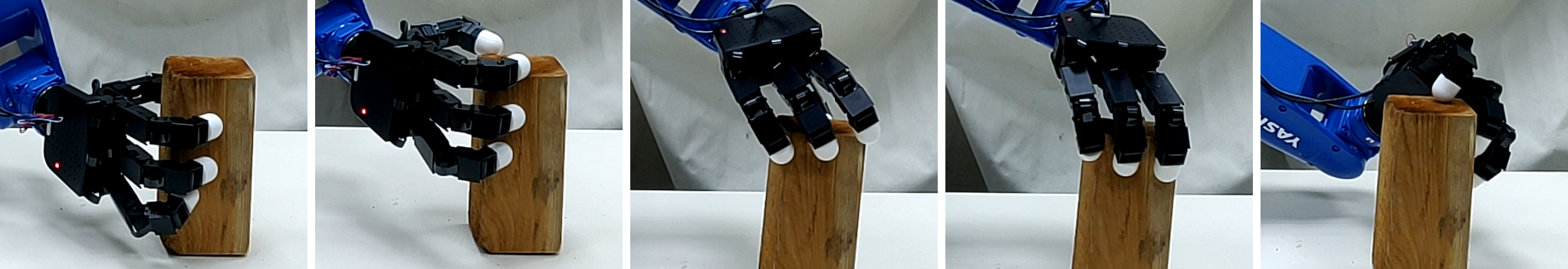}
\caption{Five iterative grasps of the Wood Block object in an experiment with the 4-finger Allegro hand. The recognition certainties about the object using BC-NP ($z=3$) without normals are, from left to right, $P_1(\ssl{O}_{\text{wood}}|\ve{q}_1)=0.20$, $P_2(\ssl{O}_{\text{wood}}|\ve{q}_1, \ve{q}_2)=0.18$, $P_2(\ssl{O}_{\text{wood}}|\ve{q}_1, \ve{q}_2, \ve{q}_3)=0.12$, $P_4(\ssl{O}_{\text{wood}}|\ve{q}_1, \ve{q}_2, \ve{q}_3,\ve{q}_4)=0.81$ and $P_5(\ssl{O}_{\text{wood}}|\ve{q}_1, \ve{q}_2, \ve{q}_3,\ve{q}_4,\ve{q}_5)=0.89$.}
\label{fig:woodblock_allegro}
\end{figure*}
\begin{figure*}
\centering
\includegraphics[width=\linewidth]{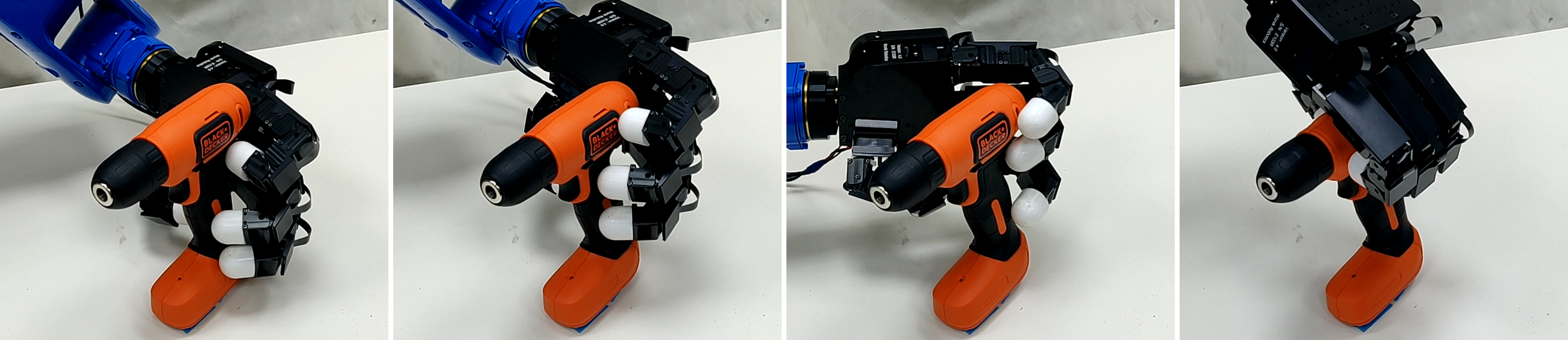}
\caption{Four iterative grasps of the Power Drill object in an experiment with the 4-finger Allegro hand. The recognition certainties about the object using BC-NP ($z=3$) without normals are, from left to right, $P_1(\ssl{O}_{\text{drill}}|\ve{q}_1)=0.29$, $P_2(\ssl{O}_{\text{drill}}|\ve{q}_1, \ve{q}_2)=0.33$, $P_2(\ssl{O}_{\text{drill}}|\ve{q}_1, \ve{q}_2, \ve{q}_3)=0.51$ and $P_4(\ssl{O}_{\text{drill}}|\ve{q}_1, \ve{q}_2, \ve{q}_3,\ve{q}_4)=0.87$.}
\label{fig:drill_allegro}
\end{figure*}



\begin{table*}[]
\centering
\small
\caption{Results for real hands experiments}
\label{tb:experiments}
\begin{tabular}{|l|l|c|cc|} \hline
\multicolumn{1}{|l}{}          &                        & Pisa hand & \multicolumn{2}{|c|}{Allegro hand} \\\hline
\multicolumn{2}{|c|}{n}                                 & 5         & \multicolumn{2}{c|}{4} \\\hline
\multicolumn{2}{|c|}{with/without normals}              & w/o       & w/o     & w/     \\\hline\hline
\multirow{2}{*}{IC}           & success rate           & 89\%      & 81\%    & 87\%   \\
                              & avg. num. of iteration & 3.76      & 2.97    & 3.78   \\\hline
\multirow{2}{*}{BC-NP}        & success rate           & 94\%      & 94\%    & 92\%   \\
                              & avg. num. of iteration & 8.15      & 10.45   & 4.80   \\\hline
\multirow{2}{*}{BC-IP}        & success rate           & 90\%      & 92\%    & 93\%   \\
                              & avg. num. of iteration & 4.86      & 9.18    & 4.17   \\\hline
\multirow{2}{*}{IC, $z=3$}    & success rate           & 51\%      & 52\%    & 49\%   \\
                              & avg. num. of iteration & 1.04      & 1.25    & 1.14   \\\hline
\multirow{2}{*}{BC-NP, $z=3$} & success rate           & 92\%      & 93\%    & 94\%   \\
                              & avg. num. of iteration & 4.22      & 4.85    & 1.62   \\\hline
\end{tabular}
\end{table*}

\section{Conclusions}

In this paper, we have proposed a novel approach for object classification through haptic glances without the use of tactile sensors. The approach is based on a unique representation of a grasp of an object. The grasp is represented as polyhedron formed by the contact points and described by a unique frame invariant parameterization. Grasp parameterization samples are solely taken from CAD models of a set of objects and used as training data for a classifier. A grasp of an object is an instance of the object's shape and may not embed sufficient information for a certain classification. Hence, we propose, observe and compare between two iterative methods where additional grasps are used to improve classification certainty. The approach was shown to be accurate both in a thorough analysis and in real hand experiments.

Future work may include smart haptic glances where individual fingers are moved in planned directions to improve classification certainty. The iterative algorithms can also integrate haptic glances from other sources such as visual perception and tactile sensors. In underactuated hands where the kinematics are not known, tactile sensors can be used along the fingers to identify where contact occurs and acquire multiple combinations of haptic glances for predictions. Hence, one grasp can provide several sequences of grasps and may remove the need to sample another grasp. In addition, we may wish to advance the method into non-rigid objects where finger pose variations are large.

\nocite{*} 
\bibliographystyle{SageH}
\bibliography{main_arxiv} 

\section*{Appendix A: Index to Multimedia Extensions}

\begin{table}[h!]
\centering
\caption*{Table of Multimedia extensions}
\begin{tabular}{llp{4cm}}
\hline
Extension & Media type & Description      \\ \hline
1         & Video      & Experiments where grasp iterations with the 4-finger Allegro hand are used to recognize several YCB objects. \\ \hline
\end{tabular}
\end{table}

\section*{Appendix B: Proof of Theorems}

Proof of Theorem \ref{thm:frame_invariance} is as follows.
\begin{proof}
Rotation $R$ and translation $\ve{d}$ represent a rigid transformation in the Euclidean space and therefore, preserves distance between points and angles between vectors \citep{Murray1994}. Hence, all lengths and angles of $\Theta(\ssl{P}_k)$ are equal to $\Theta(R\ssl{P}_k+\ve{d})$. Similarly, the angles between the vectors in $\ssl{N}_k$ to edges of $\Theta(\ssl{P}_k)$ are equal to the angles between the vectors in $R\ssl{N}_k$ to edges of $\Theta(R\ssl{P}_k+\ve{d})$. Consequently, both grasps are defined be the same polyhedron and will be parameterized equally by map $\Phi_n$.
\end{proof}

Next, we present the proof for Theorem \ref{thm:equal_phi}.
\begin{proof}
Scaling of the polyhedron $\Theta(\ssl{P}_i)$ by $\xi$ yields a new polyhedron $\Theta(\ssl{P}_j)$, where $\ssl{P}_i\in\ssl{G}_i$ and $\ssl{P}_j=\xi\cdot\ssl{P}_i\in\ssl{G}_j$, such that $\Theta(\ssl{P}_i)$ and $\Theta(\ssl{P}_j)$ are similar \citep{Osada2001}. Hence, all matching angles between both polyhedrons are equal. Furthermore, this imposes proportionality between any edge $e_{ik}\in\Theta(\ssl{P}_i)$ and its corresponding edge $e_{jk}\in\Theta(\ssl{P}_j)$, i.e., $\frac{e_{jk}}{e_{ik}}=\xi$. Also, if two polyhedrons are similar with a scale factor of $\xi$, then their surface areas are in the ratio of $\xi^2$. Therefore, for any given edge $k$, it follows that
\begin{equation}
    \xi^2=\left(\frac{e_{jk}}{e_{ik}}\right)^2=\frac{\sum_{u=1}^Ua_{ju}}{\sum_{u=1}^Ua_{iu}}
\end{equation}
where $a_{vu}$ is the surface area of facet $u$ ($u=1,...,U$) on polyhedron $\Theta(\ssl{P}_v)$. Again, let $A_v=\sqrt{\sum_{u=1}^Ua_{vu}}$, then it must be that
\begin{equation}
    \frac{e_{jk}}{e_{ik}}={\frac{A_j}{A_i}} \text{~~or~~} \frac{e_{jk}}{{A_j}}=\frac{e_{ik}}{{A_i}},
\end{equation}
for any two corresponding edges $e_{jk}$ and $e_{ik}$. Hence, scaling according to $\frac{1}{A_j}\ssl{P}_j$ and $\frac{1}{A_i}\ssl{P}_i$ will yield two equal polyhedrons. Consequently, the parameterization vectors $\Phi_n\left(\ssl{G}_i^{\left(A_i^{-1}\right)}\right)$ and $\Phi_n\left(\ssl{G}_j^{\left(A_j^{-1}\right)}\right)$ are equal.
\end{proof}

\end{document}